\documentclass[twocolumn]{article}

\usepackage{xcolor}
\usepackage{cite}
\usepackage{amsmath,amssymb,amsfonts,bm}
\usepackage{amsthm}
\usepackage{algorithmic}
\usepackage{graphicx}
\usepackage{textcomp}
\usepackage{subcaption}
\usepackage{setspace}

\newcommand{\dlt}{\textsc{Glimpse}}

% Math commands

\def\x{{\mathbf{x}}}
\def\A{{\mathbf{A}}}
\def\n{{\mathbf{n}}}

\def\f{{\mathbf{f}}}

\newcommand{\thetab}{\mathrm{\bm{\theta}}}

\def\s{{\mathbf{s}}}
\def\h{{\mathbf{h}}}

\def\p{{\mathbf{p}}}
\def\kk{{\mathbf{k}}}
\newcommand{\xib}{\mathrm{\bm{\xi}}}

\newcommand{\R}{\mathbb{R}}

\DeclareMathOperator*{\argmin}{argmin}

% ------- Theorem environment --------

\newtheorem{lemma}{Lemma}

\newtheorem{proposition}[lemma]{Proposition}

\usepackage{hyperref}
\definecolor{mydarkgreen}{rgb}{0,0.6,0.30}
\definecolor{mydarkblue}{rgb}{0,0.30,0.65}
\hypersetup{
    colorlinks=true,    % Colored links instead of boxes
    linkcolor=mydarkblue,     % Color of internal links (e.g., citations and references)
    citecolor=mydarkgreen,    % Color of citation links
    urlcolor=mydarkblue       % Color of external URL links
}

\definecolor{mydarkred}{rgb}{0.6,0.0,0.30}
\newcommand{\rev}[1]{{\color{black} #1}}
\newcommand{\revv}[1]{{\color{black} #1}}
% \doublespacing

\usepackage[textsize=tiny]{todonotes}
\usepackage{geometry}
\geometry{a4paper,textwidth=520pt,textheight=620pt}

\makeatletter
\renewcommand{\@fnsymbol}[1]{}
\makeatother

\title{{\dlt}: Generalized Locality for Scalable and Robust CT}

\author{AmirEhsan Khorashadizadeh, Valentin Debarnot, Tianlin Liu, and Ivan Dokmanić}

\begin{document}
\date{}

% \thanks{This project was supported by the European Research Council Starting
% under Grant 852821—SWING.}
% \thanks{AmirEhsan Khorashadizadeh, Valentin Debarnot and Tianlin Liu are with the Department of Mathematics and Computer Science of the University of Basel, 4001 Basel, Switzerland (e-mails: \href{mailto: amir.kh@unibas.ch}{amir.kh@unibas.ch}, \href{mailto: valentin.debarnot@unibas.ch}{valentin.debarnot@unibas.ch}, \href{mailto:t.liu@unibas.ch}{t.liu@unibas.ch}).}
% \thanks{Ivan Dokmani\'c is with the Department of Mathematics and Computer Science of the University of Basel, 4001 Basel, Switzerland, and also with the Department of Electrical, Computer Engineering, the University of Illinois at Urbana-Champaign, Urbana, IL 61801 USA (e-mail: \href{mailto: ivan.dokmanic@unibas.ch}{ivan.dokmanic@unibas.ch}).}
% \thanks{Our implementation and Google Colab demo can be accessed at \url{https://github.com/swing-research/Glimpse}.}

\maketitle

\begin{abstract}
Deep learning has become the state-of-the-art approach to medical tomographic imaging. A common approach is to feed the result of a simple inversion, for example the backprojection, to a multiscale convolutional neural network (CNN) which computes the final reconstruction. Despite good results on in-distribution test data,
this often results in overfitting certain large-scale structures and poor generalization on out-of-distribution (OOD) samples. Moreover, the memory and computational complexity of multiscale CNNs scale unfavorably with image resolution, making them impractical for application at realistic clinical resolutions. 
In this paper, we introduce {\dlt}, a \textit{local} \rev{coordinate-based} neural network for computed tomography which reconstructs a pixel value by processing only the measurements associated with the neighborhood of the pixel. {\dlt} significantly outperforms successful CNNs on OOD samples, while achieving comparable or better performance on in-distribution test data and maintaining a memory footprint almost independent of image resolution; 5GB memory suffices to train on 1024 $\times$ 1024 images which is orders of magnitude less than CNNs. {\dlt} is fully differentiable and can be used plug-and-play in arbitrary deep learning architectures, enabling feats such as correcting miscalibrated projection orientations. Our implementation and Google Colab demo can be accessed at \url{https://github.com/swing-research/Glimpse}.
\end{abstract}

\vspace{1em}
\noindent\textbf{Keywords:}
Deep Learning, Computed Tomography, Image Reconstruction

\section{Introduction}
\label{sec:introduction}
Convolutional neural networks (CNNs) have become the standard approach for tomographic image reconstruction~\cite {wang2020deep}. The U-Net~\cite{ronneberger2015u} architecture underpins numerous deep learning reconstruction methods, achieving strong results on a variety of imaging problems including computed tomography (CT)~\cite{Jin2017deep}, magnetic resonance imaging (MRI)~\cite{mccann2017convolutional} and photoacoustic tomography~\cite{davoudi2019deep}. Its success is often attributed to the particular multi-scale architecture~\cite{Liu2022multiscale}.

\revv{
Despite remarkable progress with CNN-based methods, some core practical challenges complicate their application to real problems:}
\begin{itemize}
    \item \revv{\textbf{Poor Generalization under Distribution Shift:}} CNNs show good performance on in-distribution test images similar to the training data but tend to overfit class-specific image content. This results in poor robustness to distribution shift in data and sensing \rev{\cite{li2022noise, antun2020instabilities}}. \textit{Model-based} networks address this drawback by integrating the forward and adjoint operators into multiple network layers or iterations~\cite{Adler2017solving,  Adler2018learned, gilton2019neumann,maier2019learning,Hauptmann2020multi, sahel2021deep}. This, however, hurts scalability.
    \item \revv{\textbf{High Memory and Computation Cost:}} The required memory grows steeply with image resolution~\cite{Leuschner2021quantitative} for CNNs and even more steeply for model-based networks such as learned primal-dual (LPD)~\cite{Adler2018learned}. \rev{Moreover, unlike standard networks like U-Net which can handle large images by working on patches, model-based networks like LPD do not permit patch processing since the Radon transform in the network does not handle incomplete data.}
\end{itemize}

\begin{figure*}
    \centering
    \includegraphics[width = \textwidth]{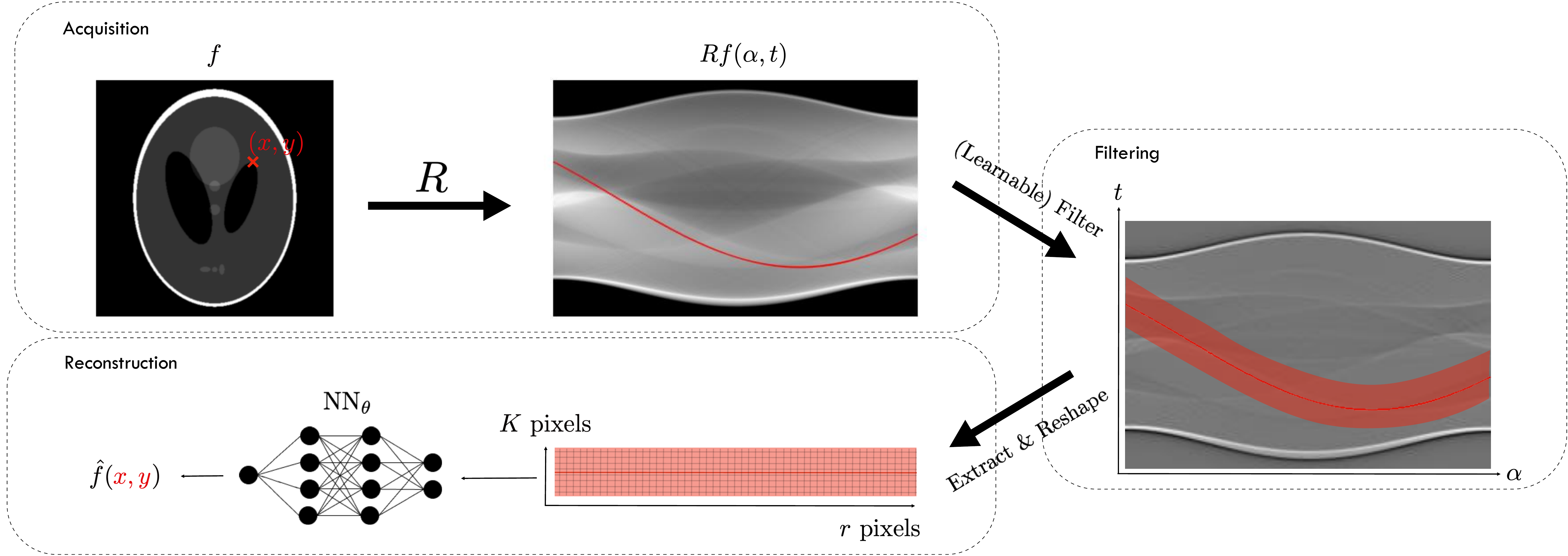}
    \caption{\dlt; $\text{NN}_\theta$ processes the measurements associated with the pixel $(x,y)$ and its neighbors extracted from the filtered sinogram. This local processing network has promising performance on OOD data while being computationally efficient all due to its locality.}
    \label{fig: network}
\end{figure*}

\subsection{\revv{Our Innovations}} In this paper, we propose \dlt, a novel coordinate-based local reconstruction framework for sparse-view CT. As shown in Figure~\ref{fig: network}, unlike large-scale CNNs that operate globally on filtered backprojection (FBP)~\cite{feldkamp1984practical} reconstructions, \dlt~estimates a given pixel value using only \emph{local measurements in the sinogram domain} associated with this pixel. There is no backprojection step. {Localization prevents {\dlt} from overfitting the large-scale features and results in robust performance under distribution shift.}

\begin{figure*}
    \centering
    \begin{subfigure}{0.5\textwidth}
      \centering
     \includegraphics[width=1.05\textwidth]{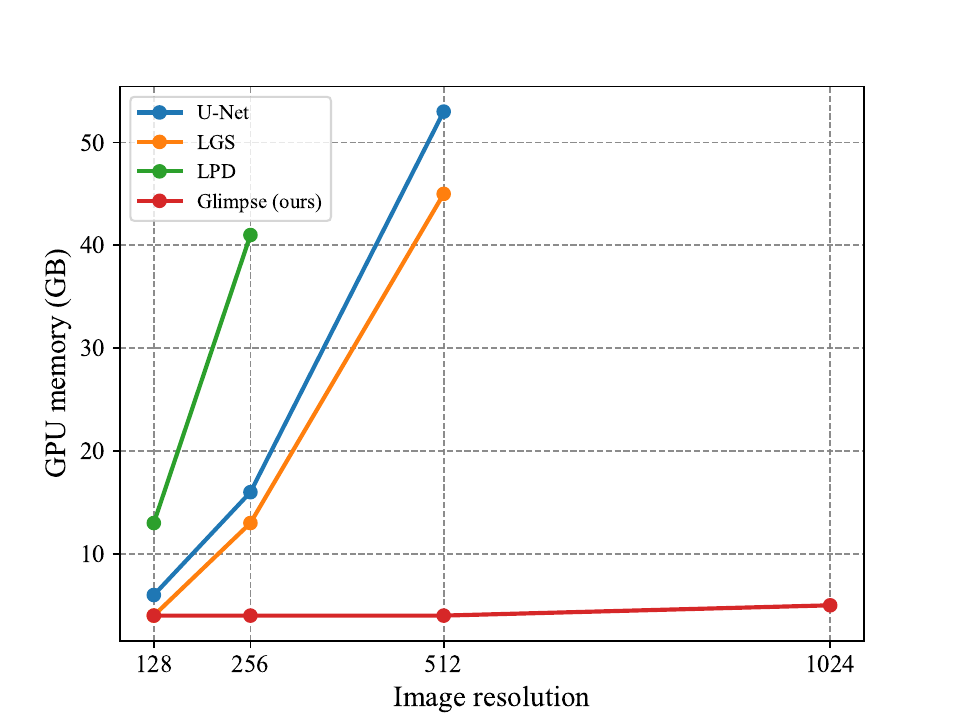}
    \caption{Memory footprint  (batch size 64)}
    \label{fig: gpu_train}
    \end{subfigure}%
    \begin{subfigure}{0.5\textwidth}
    \centering
    \includegraphics[width=1.05\textwidth]{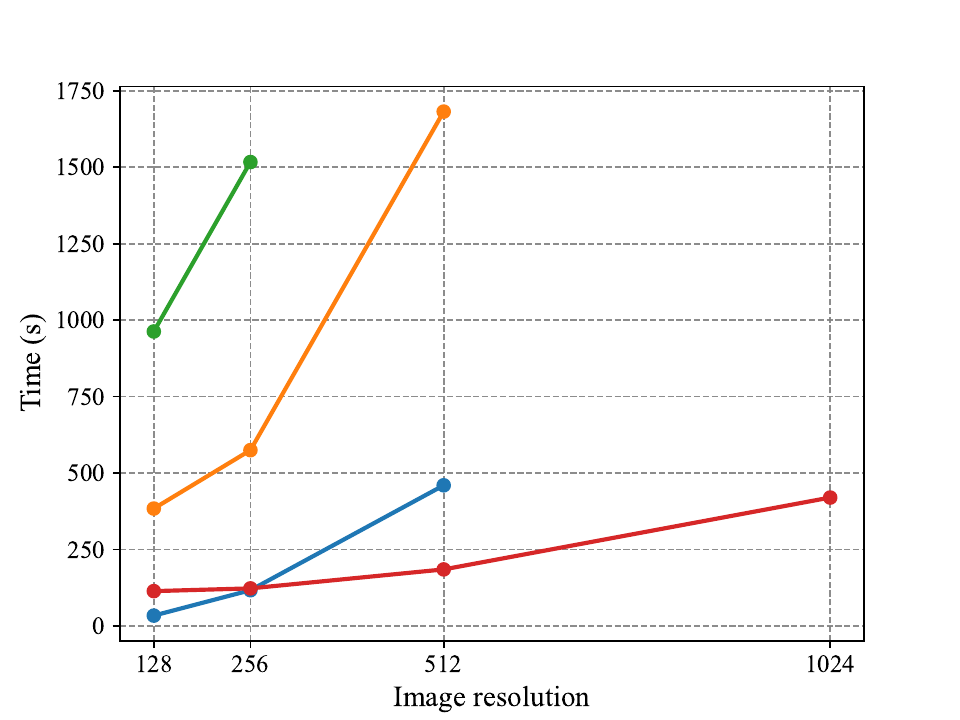}
    \caption{Training time (500 iterations)}
    \label{fig: time_train}
    \end{subfigure}
    \caption{The memory and time requirements during training vary across different models, with {\dlt} being substantially faster and more memory-efficient compared to the baselines. Remarkably, {\dlt}'s memory usage remains nearly constant regardless of image resolution, making it an excellent choice for high-dimensional image reconstruction tasks. All experiments were performed on a single A100 GPU with 80GB of memory. Missing data points indicate that the corresponding model exceeded the GPU's memory capacity at the specified resolution.}
    \label{fig: training computation}
\end{figure*}

At the same time, it results in \textit{high computational efficiency}: the coordinate-based design permits training on mini-batches of both \emph{pixels} and objects. This leads to fast and efficient training, requiring a small, fixed amount of memory almost independent from the image resolution. {As shown in Figure \ref{fig: training computation}, {\dlt} requires significantly less memory and training time than CNNs, in particular compared with model-based networks like LPD.} It can efficiently train on realistic images in resolution $1024 \times 1024$ and beyond.

{\dlt} is fully differentiable, all the way down to the sensing and integration geometry. This is an advantage over the standard CNN-based architectures. Most approaches to CT rely on fixed sensor geometry which is encoded in the forward operator, whether explicitly, as seen in methods like FBP~\cite{feldkamp1984practical}, SART~\cite{andersen1984simultaneous}, LGS~\cite{Adler2017solving}, and LPD~\cite{Adler2018learned} or implicitly in U-Net~\cite{ronneberger2015u} when taking FBP as input. This fixed geometry is a problem when faced with uncertainties in calibration or blind inversion problems where the sensor geometry information is entirely unavailable \cite{lunz2021learned, gupta2023differentiable}. Our differentiable architecture allows us to estimate projection angles which results in better reconstructions. Furthermore, differentiability enables us to replace the fixed FBP filter by one that is optimal for the noise level and data distribution; this is illustrated in Figure \ref{fig: network}. All this ultimately results in high-quality reconstructions.

\subsection{\revv{Why are U-Nets Sensitive to Distribution Shift?}}

\begin{figure}
    \centering
    \includegraphics[width = 0.48\textwidth]{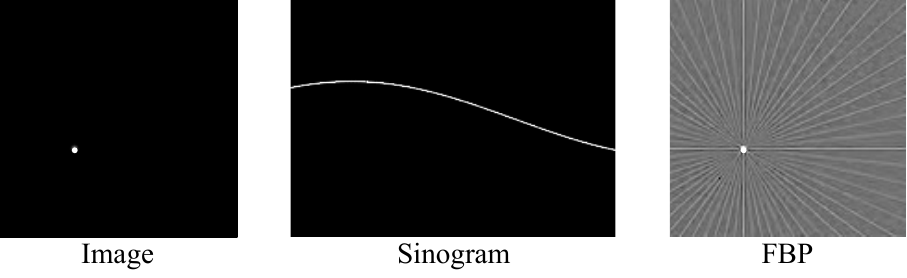}
    \caption{A point source image, its sinogram, and the sparse view FBP reconstruction. While the corresponding measurements for this pixel have sinusoidal support in the sinogram, this information is diffused all over the FBP image. \emph{The contrast of the FBP image has been stretched to emphasize this effect.}}
    \label{fig: single_pixel_sinogram}
\end{figure}

\begin{figure}
    \centering
    \includegraphics[width = 0.47\textwidth]{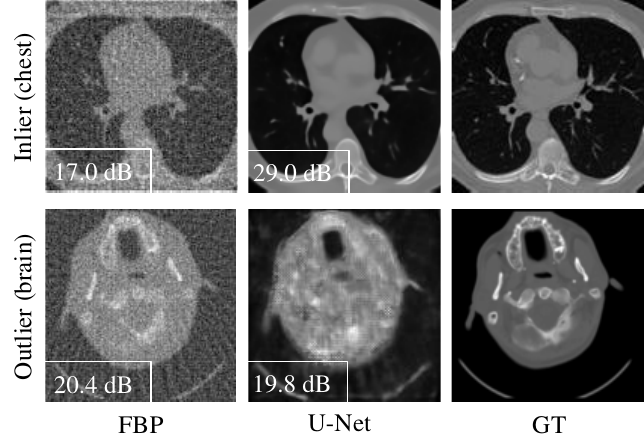}
    \caption{Performance of U-Net~\cite{ronneberger2015u} trained on chest images in resolution $128 \times 128$: evaluation on in-distribution test data (chest samples) and OOD brain samples shows that the large receptive field of U-Net hinders its ability to generalize on OOD samples, with its PSNR even falling below that of FBP reconstruction. \rev{We indicate PSNRs between the reconstructions and the ground truth.}}
    \label{fig: U-Net}
\end{figure}

We close the introduction by presenting an experiment which illustrates why U-Net-like CNNs---which post-process FBP reconstructions---generalize poorly out-of-distribution. Figure~\ref{fig: single_pixel_sinogram} shows a point-like object, its sparse view sinogram, and the FBP reconstruction. It is evident that the FBP is supported over the entire field of view. This raises the question of the ideal receptive field size for CNNs like U-Net: a large receptive field is statistically beneficial to gather information correlated with the value of a target pixel~\cite{aggarwal2018modl, hamoud2022beyond}.
\rev{A similar argument shows that backprojection introduces long-range correlations in noise.}

But the issue with models with large receptive fields is that they often overfit class-specific image content in training data which leads to poor generalization on out-of-distribution samples \cite{khorashadizadeh2022funknn}. Indeed, Figure~\ref{fig: U-Net} shows that while U-Net produces good results when tested on chest images similar to training data, it performs poorly on out-of-distribution brain images. This is problematic in domains such as medical imaging where robustness over distribution shifts and other uncertain and variable factors is important~\cite{graas2023just}.

\section{Related Work}
\label{sec: Related Works}

\subsection{Model-based vs Model-free Inversion}
\label{sec: model_free_model_based}
There are two major classes of deep learning to CT reconstruction: \emph{model-based} and \emph{model-free}.
In the model-based approach, neural networks process raw sinograms and map them to the desired CT images while the Radon transform is integrated into multiple network layers or iterations~\cite{Adler2017solving, Adler2018learned, Hauptmann2020multi,ronchetti2020torchradon}.
These methods perform remarkably well across various inverse problems, but they are computationally expensive, especially during training \cite{Leuschner2021quantitative}. The high computational cost is due, among other factors, to the repeated application of the Radon transform and its adjoint in the network architecture. 

By contrast, model-free approaches offer a computationally cheaper alternative. The Radon transform (or its adjoint) is only used once in FBP computation before the neural network~\cite{Jin2017deep, Kang2017deep,khorashadizadeh2023conditional}. 
However, these models often require deep networks with a large receptive field to leverage the information delocalized across the FBP image. \rev{Recent studies aim to bypass the fixed FBP operator to provide greater flexibility. The common approach is a direct sinogram-to-image mapping that combines CNNs and MLP blocks, effectively replacing the FBP operator with learnable components \cite{li2019learning, wurfl2018deep}. He et al. \cite{he2020radon} present a partially learnable FBP by substituting the traditional Ram-Lak filter with an MLP block and incorporating learnable weighted averaging in the backprojection step. This modified FBP is further refined by a post-processing CNN.}
Recently, Hamoud et al. \cite{hamoud2022beyond} used a measurement rearrangement technique to stratify backprojected features by angle and thus enable the use of smaller, shallower CNNs. 
% {\color{red}(if correct) This, however, does not address the issue of delocalization.}

\rev{
\subsection{Robustness of deep learning for image reconstruction}
\label{sec: robustness}
As discussed in Section \ref{sec:introduction}, deep neural networks often suffer from poor generalization and unstable reconstructions \cite{antun2020instabilities, raj2020improving, li2022noise}. In \cite{colbrook2022difficulty}, the authors present a theoretical study that highlights a trade-off between stability and accuracy and propose neural networks that navigate this trade-off and improve generalization. Genzel et al. study the role of network architecture in improving generalization \cite{genzel2022solving}. Incorporating the forward operator and enforcing measurement consistency have been shown to substantially improve generalization \cite{aggarwal2018modl, Adler2018learned, wu2024multi}. Another technique to improve generalization is jittering by additive Gaussian noise during training \cite{genzel2022solving, krainovic2024learning}. In this paper, we show that computationally efficient neural networks which incorporate the right notion of transform-domain locality achieve excellent generalization in- and out-of-distribution.
}

\rev{
\subsection{Implicit Neural Representation for Imaging}
\label{sec: INR}
{\dlt} is a coordinate-based reconstruction framework that recovers the image intensity at each pixel separately. Recently, neural fields, also known as implicit neural representations (INRs) \cite{sitzmann2020implicit, atzmon2020sal, chabra2020deep}, have emerged as a promising coordinate-based approach for representing continuous signals, images, and 3D volumes. Unlike traditional deep learning models that represent signals as discrete arrays, INRs use deep neural networks, typically MLPs, to map coordinates to signal values, enabling a \textit{continuous} signal representation. This approach offers several advantages over conventional models. For instance, INRs can seamlessly interpolate signals within a continuous space instead of being limited to a single resolution. Moreover, their coordinate-based representation allows for flexible memory usage, making them particularly well-suited for high-dimensional 3D reconstructions \cite{chen2019learning, peng2020convolutional, jiang2020local, dupont2022data, dupont2021generative, susmelj2024uncertainty} and scene representations \cite{mildenhall2021nerf}.

Coordinate-based models have also demonstrated strong performance in computational imaging. INRs efficiently model signals and their spatial derivatives which is useful for solving partial differential equations (PDEs) \cite{sitzmann2020implicit, vlavsic2022implicit}. They can be combined with self-supervised learning to learn a continuous representation of sub-sampled CT sinograms \cite{sun2021coil}. Zha et al. \cite{zha2022naf} use INRs to learn a continuous image representation that aligns with sinogram measurements for cone-beam CT reconstruction. Unlike all these methods, {\dlt} learns a map from \textit{both measurements and coordinates} to reconstruction values at individual pixels and is thus a true, learned image reconstruction operator rather than a signal parameterization.}

\subsection{Uncalibrated CT Imaging.}
\label{sec: uncalibrated discussion}
In CT imaging, the acquisition operator is usually known but only a limited number of measurements is collected, either to minimize radiation exposure or shorten acquisition time (sparse view) or when sample geometry and stage mechanics limit projection angles to a cone (limited view).
In certain situations, the acquisition operator is only partially or approximately known. Neglecting this uncertainty can result in a significant drop in the quality of the reconstructions~\cite{lunz2021learned}. To tackle this challenge, total least squares approaches have been developed, involving the perturbation of an assumed forward operator~\cite{golub1980analysis, markovsky2007overview, gupta2021total} or trained networks combined with autodifferentiation and resampling \cite{gupta2023differentiable}.

\section{Methods}
\label{sec: methods}
\revv{In this section we introduce {\dlt
}. We begin with a brief overview of tomographic imaging in order to introduce the filtered backprojection formula.}
\subsection{Computed Tomography}
\label{sec: Computed Tomography}
Tomographic imaging~\cite{kak2001principles} plays an important role in many applications including medical diagnosis~\cite{wang2008outlook}, industrial testing~\cite{de2014industrial}, and security~\cite{wells2012review}.
We consider 2D computed tomography where the image of interest $f(\x)$ with size $D \times D$ is reconstructed from measurements of (X-ray) attenuation. The forward model is the Radon transform $Rf$ which computes integrals of $f(\x)$ along lines $L$, 
\begin{align}
    Rf(L) = \int_{L} f\big(\x \big) |d\x|.
    \label{eq: Radon with L}
\end{align}
We parameterize a line $L$ by its distance from the origin $t$ and its normal vector's angle with the $x$-axis $\alpha$. We can then reformulate~\eqref{eq: Radon with L} as 
\begin{align} \label{eq:radon_def}
    &Rf(\alpha, t) = \int_{-\infty}^\infty f\big(x(z), y(z) \big) dz,
\end{align}
where,
\begin{align}
    x(z) &= z\cos(\alpha) - t\sin(\alpha), \\
    y(z) &= z\sin(\alpha) + t\cos(\alpha).
\end{align}
The image of interest is observed from a finite set of $r$ different viewing directions $\{\alpha_m\}_{m = 1}^r$, each having $N$ parallel, equispaced rays. The measurements of the attenuation are then represented as a transform-domain ``image'' $\s \in \R^{N \times r}$ called a sinogram.

Standard methods for CT image recovery discretize the image of interest $f(\x)$ into a discrete image  $\f \in \R^{N \times N}$ supported on an $N \times N$ grid. 
% \textbf{[Is this N the same N which appears in the number of rays above?]} \aekh{yes}
After discretization, the forward model can be written as
\begin{equation}
    \s = \A \f + \n
    \label{eq: forward_op}
\end{equation}
where $\A$ is the matrix of the discretized Radon transform and we model the measurement noise by $\n$. The most commonly used analytical inversion method is the filtered backprojection (FBP),
\begin{align}
    \f^\text{FBP}_{x,y} = \sum_{m = 1}^r \tilde{\s}(y\cos(\alpha_m) - x\sin(\alpha_m),m),
    \label{eq: FBP}
\end{align}
where $\f^\text{FBP} \in \R^{N \times N}$ is the FBP reconstruction, $\tilde{\s}[\cdot,m] = \s[\cdot,m] * \h$, $\h$ is a certain high-pass filter, $*$ denotes the convolution and linear interpolation is used in \eqref{eq: FBP} for evaluating $\tilde{\s}(x,\cdot)$ when $x$ is not an integer. As shown in Proposition~\ref{prop:optimal_filter} in Appendix~\ref{app: optimal filter}, while the Ram-Lak filter is the optimal choice for $\h$ in the case of noise-free complete measurements, it amplifies noise in real measurements, yielding poor reconstructions.

With noise and an incomplete collection of projections, tomographic image reconstruction is an ill-posed inverse problem that requires an image prior as regularizer.
We introduce our proposed method, {\dlt}, designed to respect the geometry of CT, which implicitly learns such a prior from training data.

\subsection{{\dlt}: Generalized Local Imaging with MLPs}
\label{sec: model}

To recover the image $\f(x,y)$ at location $\x = (x,y)$, we identify the elements in the sinogram $\s$ influenced by this pixel. As illustrated in Figure~\ref{fig: single_pixel_sinogram}, the corresponding measurements for the pixel $(x,y)$ are supported along a sinusoidal curve in the sinogram; we denote them $\text{SIN}_{x,y} \in \R^r$, with elements being given as 
\begin{align}
    \text{SIN}_{x,y}(m) = \s(y\cos(\alpha_m) - x\sin(\alpha_m),m).
    \label{eq: sinogram sampler}
\end{align}
Similar to \eqref{eq: FBP}, we can use interpolation to evaluate $\s(x,\cdot)$ for non-integer $x$. This localization is formally captured by the following proposition.
\begin{proposition}[Impulse response of Radon transform]\label{prop:sine}
    Let $f(u,v) = \delta(u-x,v-y)$ be the Dirac delta distribution in $\R^2$ at location $(x, y)$. Its Radon transform (in the sense of distributions) is
    \begin{align*}
        Rf(\alpha, t)
         &= \begin{cases}
      1, & \text{if}\ t = r \cos(\alpha + \varphi) \\
      0, & \text{otherwise},
    \end{cases}\\[-2mm]
    \end{align*}
    where $r = \sqrt{x^2 + y^2}$, $\varphi = \mathrm{atan2}(y, x)$, and $\mathrm{atan2}(\cdot, \cdot)$ the four-quadrant arctangent.
\end{proposition}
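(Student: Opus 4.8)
The plan is to substitute the point source directly into the integral definition of the Radon transform and collapse the resulting one-dimensional integral using the sifting property of the Dirac delta. Factoring the two-dimensional delta as $\delta(u-x,v-y)=\delta(u-x)\delta(v-y)$ and inserting the line parameterization $(u,v)=(x(z),y(z))$ from~\eqref{eq:radon_def}, I would obtain
\begin{align*}
    Rf(\alpha,t)=\int_{-\infty}^\infty \delta\big(z\cos\alpha - t\sin\alpha - x\big)\,\delta\big(z\sin\alpha + t\cos\alpha - y\big)\,dz.
\end{align*}
The first factor pins the integration variable $z$ to a single point, so the idea is to evaluate the $z$-integral against it and let the second factor survive as a constraint on $t$.

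First I would assume $\cos\alpha\neq 0$ and apply the scaling rule $\delta(c\,z - b) = |c|^{-1}\delta(z - b/c)$ to the first factor, which collapses the integral at $z^\star = (x + t\sin\alpha)/\cos\alpha$ and produces a Jacobian factor $1/|\cos\alpha|$. Substituting $z^\star$ into the argument of the second delta and simplifying with $\sin^2\alpha+\cos^2\alpha=1$, the argument reduces to $(t + x\sin\alpha - y\cos\alpha)/\cos\alpha$. Applying the scaling rule once more contributes a factor $|\cos\alpha|$ that exactly cancels the earlier Jacobian, leaving
\begin{align*}
    Rf(\alpha,t)=\delta\big(t - (y\cos\alpha - x\sin\alpha)\big).
\end{align*}
This is supported precisely on the sinusoid $t = y\cos\alpha - x\sin\alpha$ and carries unit weight there, which is what the stated value $1$ on the support records; the ``otherwise $0$'' case is immediate, since off the sinusoid the line never meets $(x,y)$.

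To make the reduction fully rigorous rather than formal, I would instead pair $Rf$ with an arbitrary test function $\phi(t)$ at fixed $\alpha$ and show $\langle Rf(\alpha,\cdot),\phi\rangle = \phi(y\cos\alpha - x\sin\alpha)$ by the same change of variables, which is exactly the defining property of the claimed delta. The degenerate direction $\cos\alpha = 0$ must be treated separately: there the first factor no longer constrains $z$, so I would carry out the $z$-integral using the second delta factor instead, with the two factors swapping roles, and verify the same conclusion (here $y\cos\alpha - x\sin\alpha = -x$). The main obstacle is precisely this bookkeeping of the two delta factors and their Jacobians---ensuring the scaling factors cancel and that the single degenerate angle is handled without gaps---together with stating clearly that the rigorous object is the line delta whose unit mass is abbreviated as $1$ in the statement.
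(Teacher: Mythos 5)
Your proposal is correct and follows essentially the same route as the paper's proof: substitute the point source into the integral definition \eqref{eq:radon_def} and determine where the resulting one-dimensional integrand can be nonzero. Your version is in fact more careful than the paper's, which only solves the two support equations for $z$ and then $t$; you additionally track the delta scaling factors (showing the Jacobians cancel to leave the unit-weight line delta $\delta\big(t - y\cos\alpha + x\sin\alpha\big)$), treat the degenerate angle $\cos\alpha = 0$ separately, and make explicit the distributional meaning of the stated value $1$ on the sinusoid.
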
 
The standard proof is outlined in Appendix~\ref{app:proof_sine}.

This may seem to suggest that the neighborhood of the sinusoid-shaped part of the sinogram $\text{SIN}_{x,y}$ contains sufficient information to recover the pixel intensity at location $(x,y)$. Note however that the pixel at $(x, y)$ influences the integral over any line passing through it and thus also the parts of the sinogram corresponding to pixels on those other lines; this can be loosely thought of as a consequence of non-orthogonality of the Radon transform. The above statement is thus more accurately a statement about the \textit{filtered} sinogram since high-pass filtering in the FBP ``relocalizes'' information. We mention in the passing that it is also related to the celebrated support theorems of Sigurdur Helgason, Jan Boman, and others \cite{helgason1965radon,HELGASON198091,boman1987support,boman2006helgason} which state that a compactly-supported image may be recovered from a compactly-supported subset of its Radon data under idealized sampling and SNR conditions. 

Indeed, the high-pass filtering in the FBP is derived for noiseless data and a continuum of observed angles. In reality, the projections are corrupted with noise and come from a sparse subset of projection angles. We address this by 1) incorporating ``contextual information'' about the target pixel and 2) letting the filter be learnable to adapt it to the specifics of discretization and noise.

As shown in Figure~\ref{fig: network}, we exploit the spatial regularity of medical images (encoded in training data) by using the measurements that provide \textit{local} information around $(x,y)$. This ensures that the model does not overfit large-scale features in the training data while maintaining low computational complexity. We thus additionally extract from the filtered sinogram the regions associated with the neighboring pixels around $(x,y)$ and store this information in vector $\p_{x,y}$,
\begin{equation}
    \p_{x,y} = \{ \text{SIN}_{x+dn,y+dn'} | n,n' = -\left\lfloor C/2 \right\rfloor,\cdots,\left\lfloor C/2 \right\rfloor \},
    \label{eq: cropper}
\end{equation}
where $K = C^2$ determines the number of neighboring pixels around $(x,y)$ for an odd number $C\geq 1$ and $d$ denotes the scale of the window which adjusts the receptive field. 
% We concatenate the vectors inside $\p_{x,y}$ and make a long vector $\p_{x,y} \in \R^{r \times K}$ {\color{red}This is confusing; it's always $\p_{x, y}$; better to say less than more if not crucial ;)}. 
In order to recover the image at location $(x,y)$ from $\p_{x,y}$, we use a neural network $\text{NN}_\theta:~\R^{r \times K} \to~\R$ with parameters $\theta$,
\begin{equation}
    \hat{\f}(x,y) = \text{NN}_\thetab\big(\p_{x,y}\big),
    \label{eq: pipeline}
\end{equation}
which estimates the pixel intensity $\hat{\f}_{x,y}$ from the local features around $(x,y)$.
As we typically use a small neighborhood size $K$, we can parameterize $\text{NN}_\theta$ using a multi-layer perceptron (MLP).
We call the proposed model {\dlt}, standing for generalized\footnote{The word ``generalized'' emphasizes that locality is also encoded in the transform domain, not just in real space as in some of earlier work.} local imaging with MLPs. \rev{\rev{{\dlt} can be viewed as a learnable alternative to FBP as it replaces the simple averaging along the corresponding sinusoidal support with a learnable non-linear operator, parameterized by $\text{NN}_\theta$, which processes the local contextual measurements. Our method can be seen as an interpolation between CNNs applied globally to FBP reconstructions and model-based architectures which explicitly employ the backprojection operator. This is because our inversion is structured "like an FBP" (which simply sums filtered sinogram values along the sinusoidal support) whereas we allow for a more general function of the neighborhood of the sinusoidal support (and thus can approach optimal reconstruction for a larger class of priors than Gaussian processes).}}

In the following section, we provide further details regarding {\dlt}'s architecture. We describe in Section \ref{sec: adaptive filter} how our implementation of {\dlt} allows adapting to noisy measurements. We then propose a training strategy with resolution-agnostic memory usage in Section~\ref{sec: Training}. In Appendix~\ref{app: learnable geometry}, we show how backpropagating through {\dlt} can compensate for calibration errors.

\subsection{MultiMLP: efficient processing of increased projections}
\label{sec: MultiMLP}
\begin{figure}
    \centering
    \includegraphics[width = 0.43 \textwidth]{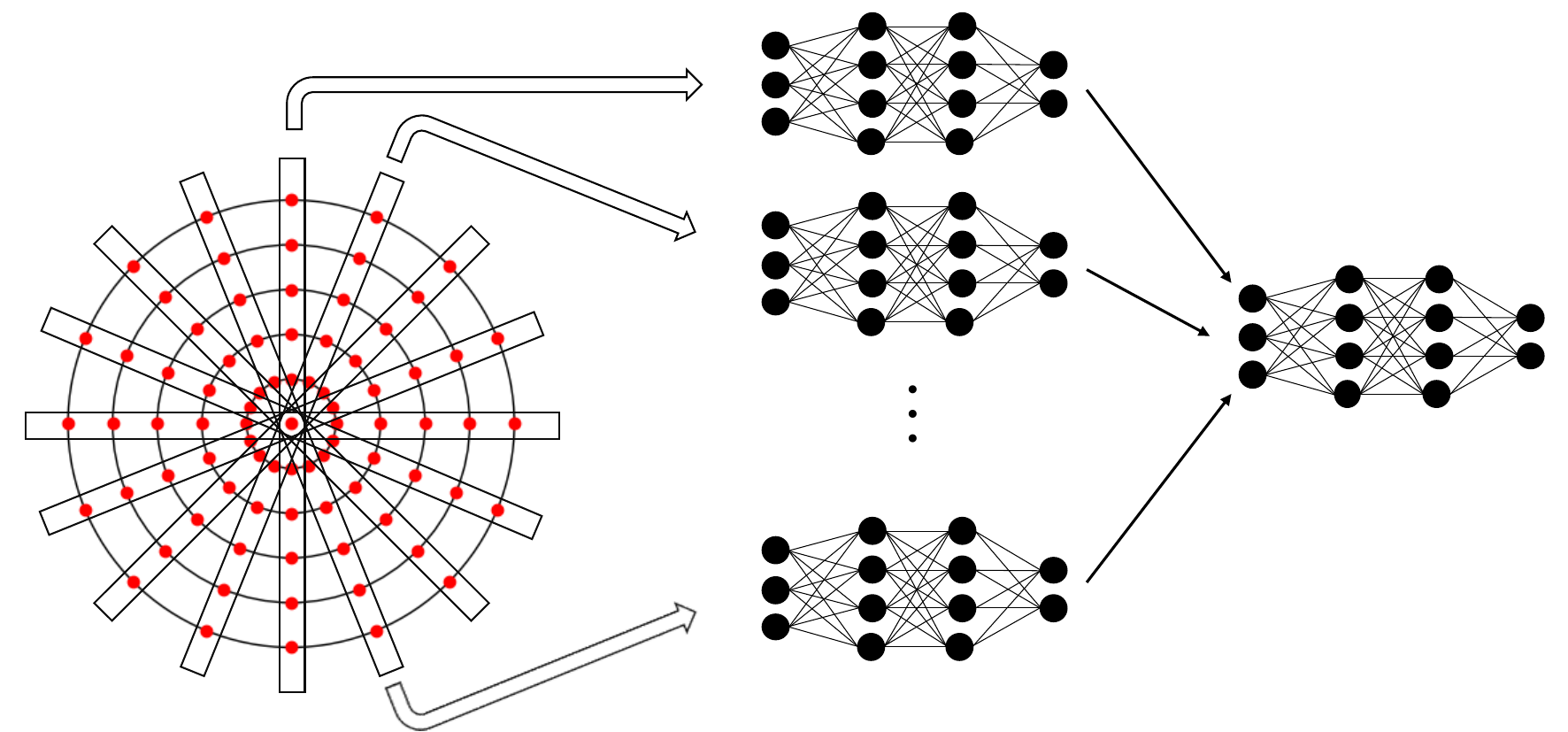}
    \caption{MultiMLP architecture; the input patch (here over a circular geometry) is split into smaller chunks each processed with a separate MLP, the extracted information is then mixed by another MLP. Each red point contains the associated sinusoidal curve extracted from the sinogram.}
    \label{fig: multiMLP}
\end{figure}
\rev{
The number of parameters in $\text{NN}_\theta$ when parameterized by an MLP scales with the number of projections $r$ and the neighborhood size $K$, which increases computational complexity and slows down training. To mitigate this issue, we propose MultiMLP, a new architecture designed to efficiently process large numbers of projections and neighborhoods. Inspired by vision transformers \cite{dosovitskiy2020image}, we partition the extracted measurements $\p_{x,y}$ into smaller chunks, each processed by a separate MLP, as illustrated in Figure \ref{fig: multiMLP}. The outputs of these MLPs are then mixed using another MLP. For ease of visualization, we show a circular neighborhood where each red point represents its associated sinusoidal curve.
}

\subsection{Adaptive Filtering for Noisy Measurements}
\label{sec: adaptive filter}

The Ram-Lak high-pass filter is the optimal filter $\h$ for the FBP reconstruction in the case of complete noise-free measurements; see Appendix~\ref{app: optimal filter} for a standard demonstration. In real applications, however, we always encounter noisy projections from a subset of angles. The Ram-Lak filter is then suboptimal and typically degrades the reconstruction quality as it amplifies high-frequency noise. Alternative filters with lower amplitudes in high frequencies like Shepp-Logan, Cosine, and Hamming have been used to address this, but they are all ad hoc choices. It is advantageous to adapt $\h$ to the specifics of noise and sampling strategy in the target application. To design this task-specific filter, 
% we let $\text{NN}_\thetab$ take as input the filtered sinogram $\tilde{\s}[\cdot,m] = \s[\cdot,m]* \h$ and 
we consider the coefficients of the filter $\h$ (in the frequency domain) as trainable parameters to be optimized during training as depicted in Figure \ref{fig: network}. This allows us to automatically learn a noise-adaptive filter from data, again with almost no additional computational cost.

\subsection{Resolution-agnostic Memory Usage in Training}
\label{sec: Training}
\rev{{\dlt} is fully differentiable which enables the optimization of the receptive field scale, filter parameters, and MLP weights via backpropagation during training.} To simplify notation, we denote the entire described {\dlt} pipeline by $\hat{\f}(\x) =  \text{\dlt}_\phi(\x,\s)$. The inputs are the target pixel coordinates $\x = (x,y)$ and the sinogram $\s$; the output is an estimate of $\f(x, y)$. The parameters $\phi$ denote the trainable parameters of {\dlt} including the MLP weights $\theta$, the projection angles $\{\alpha_m\}_{m=1}^r$ (see Appendix~\ref{app: learnable geometry}), the adaptive filter $\h$ and the window receptive field scale $d$. We consider a set of training data $\{(\s_i, \f_i)\}_{i = 1}^L$ from the noisy sinograms and images. We optimize the {\dlt} parameters $\phi$ using gradient-based optimization by minimizing
\begin{align}
    \phi^* = \argmin_\phi ~ \sum_{i = 1}^{N^2} \sum_{j = 1}^L | \text{\dlt}_\phi (\x_i, \s_j) - \f_j(\x_i) |^2.
    \label{eq: training}
\end{align}
At inference time, we simply evaluate the image intensity at any pixel as $\hat{\f}_\text{test}(\x) = \text{\dlt}_{\phi^*}(\x, \s_\text{test})$.
One major advantage of {\dlt} compared to CNNs like U-Net and LPD is its low memory and compute complexity. Memory requirements of CNN-based models scale steeply with image resolution, making them prohibitively expensive for realistic resolutions.
As shown in \eqref{eq: training}, {\dlt} can be trained using stochastic gradient-based optimizers with the flexibility to select mini-batches from both the objects and pixels \rev{thanks to its coordinate-based design}. This leads to a memory footprint nearly agnostic to resolution, which makes {\dlt} suitable for training on realistic image resolutions of $1024 \times 1024$ and higher.

\section{Experiments}
\label{sec: Experiments}

We benchmark {\dlt} against successful CNN-based baselines for sparse-view CT reconstruction: U-Net~\cite{ronneberger2015u}, \rev{iRadonMAP \cite{he2020radon} with U-Net as the post-processing CNN}, learned gradient scheme (LGS)~\cite{Adler2017solving} and learned primal-dual (LPD)~\cite{Adler2018learned}. \rev{For a thorough comparison we created two additional baselines: 1) iRadonMAP-ff: in the original iRadonMAP, the filter $\h$ in \eqref{eq: FBP} is replaced with an MLP architecture. Here, we consider iRadonMAP-ff which rather uses the learnable Fourier filter $\h$ introduced in Section \ref{sec: adaptive filter}, allowing us to ablate the effects of different filtering procedures; 2) iRadonMAP-ffnu: the original iRadonMAP employs a post-processing CNN to enhance reconstruction quality. To assess the performance of the linear model alone, we consider iRadonMAP-ffnu, which excludes the CNN. This comparison with {\dlt} helps us understand the significance of our non-linear mapping $\text{NN}_\theta$ and the inclusion of neighboring pixels.}
The reconstruction quality is quantified using the peak signal-to-noise ratio (PSNR) and Structural Similarity Index (SSIM)~\cite{wang2004image}. \rev{Bottom left windows in Figures show the PSNR between the reconstructed image and the ground truth.}

We implement all models in PyTorch~\cite{paszke2019pytorch} on a machine equipped with a Nvidia A100 GPU with 80GB memory. All models were trained for 200 epochs with MSE loss using the Adam optimizer~\cite{kingma2014adam}. A learning rate of $10^{-4}$ was used for {\dlt}, U-Net and iRadonMAP, and of $10^{-3}$  for LGS and LPD. All models were trained with batch size 64. For {\dlt}, for each mini-batch of random targets, we ran optimization on a random mini-batch of 512 pixels 3 times.

In Section~\ref{sec: Sparse-view CT}, we compare {\dlt} to CNN-based models for sparse-view CT  reconstruction on both in-distribution and OOD data. In Section~\ref{sec: computational efficiency}, we analyze the computational efficiency of the aforementioned models. We analyze the learned filters $\h$ across different measurement noise levels in Section~\ref{sec: learned filter}. \rev{We study the influence of the number of projections and neighboring pixels in Sections \ref{sec: ablation_proj} and \ref{sec: window size}.} 
\rev{Finally, in Appendix~\ref{app: learnable geometry}, we present our method for learning the projection angles jointly with the image reconstruction to address uncalibrated and blind scenarios.}

\subsection{Sparse view CT Image Reconstruction}
\label{sec: Sparse-view CT}

\begin{figure*}
    \centering
    \begin{subfigure}{.95\textwidth}
      \centering
     \includegraphics[width=\textwidth]{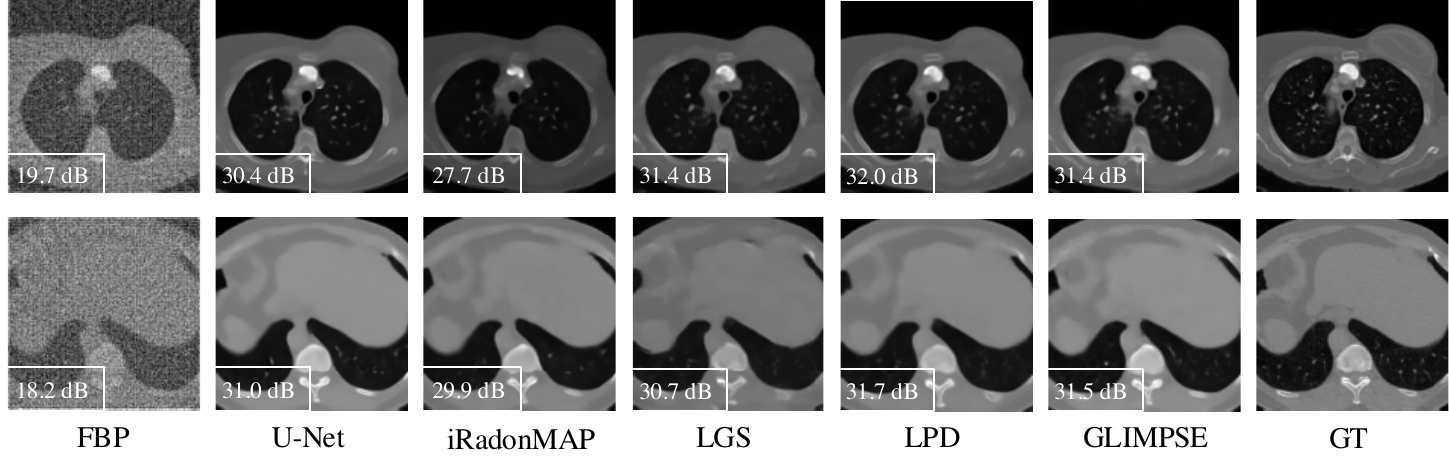}
    \caption{In-distribution chest samples}
    \label{fig: 128_test}
    \end{subfigure} \\ [3mm]
    \begin{subfigure}{.95\textwidth}
    \centering
    \includegraphics[width=\textwidth]{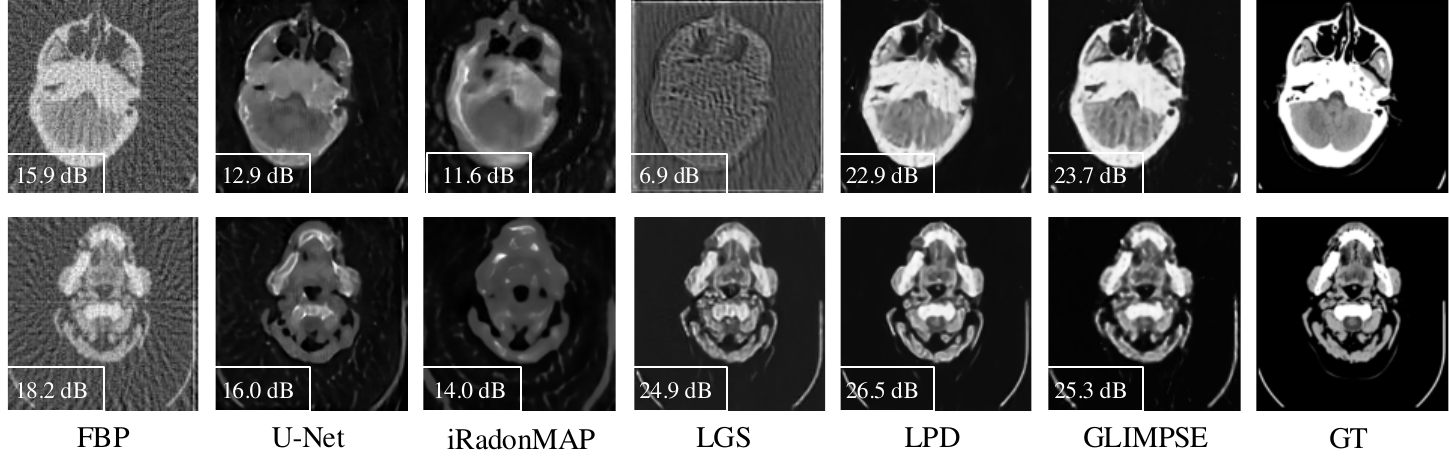}
    \caption{OOD brain samples}
    \label{fig: 128_ood}
    \end{subfigure}
    \caption{Performance of different models trained on training data of chest images and evaluated on in-distribution and OOD samples. {\dlt} shows very strong performance on OOD data, significantly better than U-Net~\cite{ronneberger2015u}, \rev{iRadonMAP \cite{he2020radon}}, LGS~\cite{Adler2017solving} and comparable with LPD~\cite{Adler2018learned}. \rev{We indicate PSNRs between the reconstructions and the ground truth.}}
    \label{fig: 128_results}
    \end{figure*}

\begin{table}
\renewcommand{\arraystretch}{1.3}
    \centering
    \caption{Comparison of different models for sparse view CT. The reconstruction quality is calculated on 64 test samples.}
    \label{tab: quantitative results}
    \resizebox{0.49\textwidth}{!}{%
    \begin{tabular}{c| c |cc|cc}
    \textbf{Methods} & \textbf{Num params} & \multicolumn{2}{c}
    {\textbf{In-distribution (chest)}} & \multicolumn{2}{c}{\textbf{Out-of-distribution (brain)}} \\
 \hline
  & & \textbf{PSNR} & \textbf{SSIM} &  \textbf{PSNR} & \textbf{SSIM} \\
    \hline
    FBP~\cite{feldkamp1984practical} & $0$  & $17.0 \pm 1.9$ & $0.17 \pm 0.06$ & $17.1 \pm 1.3$ & $0.22 \pm 0.02$ \\
    U-Net~\cite{ronneberger2015u} & $7800$k  & $30.1 \pm 1.4$  & $0.84 \pm 0.02$ & $15.1 \pm 1.8$ & $0.28 \pm 0.03$ \\
    \rev{iRadonMAP~\cite{he2020radon}} & $8400$k  & $28.5 \pm 1.3$  & $0.80 \pm 0.03$ & $13.4 \pm 2.1$ & $0.23 \pm 0.06$  \\
    \rev{iRadonMAP-ff} & $8200$k  & $30.1 \pm 1.3$  & $0.83 \pm 0.02$ & $14.2 \pm 1.6$ & $0.25 \pm 0.04$  \\
    \rev{iRadonMAP-ffnu} & $500$k  & $25.2 \pm 1.5$  & $0.64 \pm 0.03$ & $19.5 \pm 1.9$ & $0.36 \pm 0.07$  \\
    LGS~\cite{Adler2017solving} & $19$k & $30.9 \pm 1.4$  & $0.84 \pm 0.02$ & $20.5 \pm 7.7$  & $0.54 \pm 0.31$  \\
    LPD~\cite{Adler2018learned} & $400$k & $\textbf{31.6} \pm 1.4$ & $\textbf{0.86} \pm 0.02$ & $\textbf{25.5} \pm 2.6$  & $0.76 \pm 0.06$   \\ 
     {\dlt} (MLP) & $900$k   & $30.9 \pm 1.4$ &   $0.84 \pm 0.02$ & $25.1 \pm 2.3$ &   $\textbf{0.79} \pm 0.05$  \\
     {\dlt} (MultiMLP)  & $900$k  & $31.0 \pm 1.4$ &   $0.84 \pm 0.02$ & $25.0 \pm 2.3$ &   $0.77 \pm 0.05$  \\
    \hline
    \end{tabular}
    }
\end{table}

We simulate parallel-beam X-ray CT with $r = 30$ projections uniformly distributed around the object with additive Gaussian noise to reach a signal-to-noise ratio (SNR) of 30 dB. Model performance is assessed on 64 in-distribution test samples of chest images, while 16 OOD brain images~\cite{hssayeni2020computed} are included to evaluate the generalization capability of the models.

\revv{{\dlt} (MLP) uses an MLP with 9 hidden layers of dimensions [256, 256, 256, 256, 128, 128, 128, 64, 64], with ReLU activations. {\dlt} (MultiMLP) consists of nine small MLP blocks, each with three hidden layers of size 128. The outputs of these MLPs are then combined using an additional MLP with the same architecture. To ensure a fair comparison, both {\dlt} (MLP) and {\dlt} (MultiMLP) are designed to have a comparable number of trainable parameters.} The input to the MLP network consists of sinusoidal curves sampled from $K = 9^2$ neighboring pixels. To prevent boundary cross talk due to circular convolution (since we implement an unconstrained discrete Fourier transform multiplier), we apply zero-padding with a size of 512 to the sinogram before applying the filter $\h$. Linear interpolation is used in~\eqref{eq: sinogram sampler}.

\subsubsection{Training data of chest images}
\label{sec: chest images training}
We use 35820 training samples of chest images from the LoDoPaB-CT dataset~\cite{leuschner2021lodopab} in resolution $128 \times 128$.
Figure~\ref{fig: 128_test} and Table~\ref{tab: quantitative results} show the performance of different models on in-distribution test samples of chest images. We see that {\dlt} (MLP) and {\dlt} (MultiMLP) outperform successful CNNs like U-Net \rev{and iRadonMAP} and achieve comparable performance with LGS and LPD methods, all while using simple MLPs.

Figure~\ref{fig: 128_ood} and Table~\ref{tab: quantitative results} compare the various models trained on chest images and applied to OOD brain images. This experiment demonstrates that while U-Net, \rev{iRadonMAP and iRadonMAP-ff} excel on in-distribution samples, their performance significantly deteriorates on OOD data.

By contrast, {\dlt} (MLP) shows strong performance on OOD data. \rev{{\dlt} (MultiMLP) achieves comparable performance with {\dlt} (MLP) which showcases the suitability of the new MultiMLP architecture.} Although LPD's performance on OOD data is sometimes comparable or slightly better than that of {\dlt}, it comes at an extremely high memory and compute cost; we analyze this further in Section \ref{sec: computational efficiency}. 

\rev{Table \ref{tab: quantitative results} also highlights the superior performance of {\dlt} compared to iRadonMAP and its variants, particularly iRadonMAP-ffnu, which excludes the post-processing CNN. This can be explained by two key factors: (1) Unlike iRadonMAP, which extracts a single sinusoidal curve per pixel, {\dlt} also processes neighboring pixels, enabling significantly better reconstructions; and (2) while iRadonMAP-ffnu uses a linear transformation for local neighborhood processing, {\dlt} leverages a much more expressive non-linear mapping via MLPs.}

\rev{On the other hand, iRadonMAP and iRadonMAP-ff show better reconstruction on in-distribution chest data but generalize poorly compared to the local processing iRadonMAP-ffnu. This is due to the post-processing CNN in iRadonMAP and iRadonMAP-ff, which negatively impacts generalization. Finally, the filter in iRadonMAP-ff outperforms the MLP filter in the original version, demonstrating the advantage of simple linear filtering, as discussed in Section \ref{sec: adaptive filter}.}

\begin{table}
    \renewcommand{\arraystretch}{1.3}
        \centering
        \caption{Comparison of different models for sparse view CT image reconstruction; the reconstruction quality is calculated on 64 test samples.}
        \label{tab: celeba}
        \resizebox{0.49\textwidth}{!}{%
        \begin{tabular}{c| c |cc|cc}
        \textbf{Datasets} & \textbf{Num samples} & \multicolumn{2}{c}
        {\textbf{Chest}} & \multicolumn{2}{c}{\textbf{Brain}} \\
     \hline
      & & \textbf{PSNR} & \textbf{SSIM} &  \textbf{PSNR} & \textbf{SSIM} \\
        \hline
        Chest \cite{leuschner2021lodopab} & $35820$  & \textbf{30.9 }& $\textbf{0.84}$ & $25.1$ & $0.79$ \\
        DIV2K \cite{agustsson2017ntire} & $800$  & $27.8$  & $0.75$ & $23.3$ & $0.65$ \\
        CelebA-HQ \cite{karras2017progressive} & $30000$  & $28.8$  & $0.79$ & $\textbf{25.3}$ & $\textbf{0.80}$  \\
        \hline
        \end{tabular}
        }
\end{table}

\subsubsection{Training data of natural images}
\label{sec: natrual images training}
\rev{
The robustness of {\dlt} to distribution shift motivates an experiment to examine the impact of the training dataset on performance. For this purpose, we consider two distinct datasets of natural images: (1) DIV2K \cite{agustsson2017ntire}, with 800 high-quality natural images, and (2) CelebA-HQ \cite{karras2017progressive}, with 30,000 high-resolution images of human faces. Except the training dataset, the network architecture and the training details are the same as Section \ref{sec: chest images training}.
Table \ref{tab: celeba} presents the performance of {\dlt} trained on these datasets and applied to chest and brain medical images. Notably, CelebA-HQ, despite being visually unrelated to medical images, trains {\dlt} as effectively as the chest dataset. By contrast, training with a smaller dataset like DIV2K results in a significant drop in reconstruction quality, highlighting the importance of large high-quality data for improving model generalization. 
}

\subsection{Computational Efficiency}
\label{sec: computational efficiency}

\begin{figure*}
    \centering
    \begin{subfigure}{0.5\textwidth}
      \centering
     \includegraphics[width=1.05\textwidth]{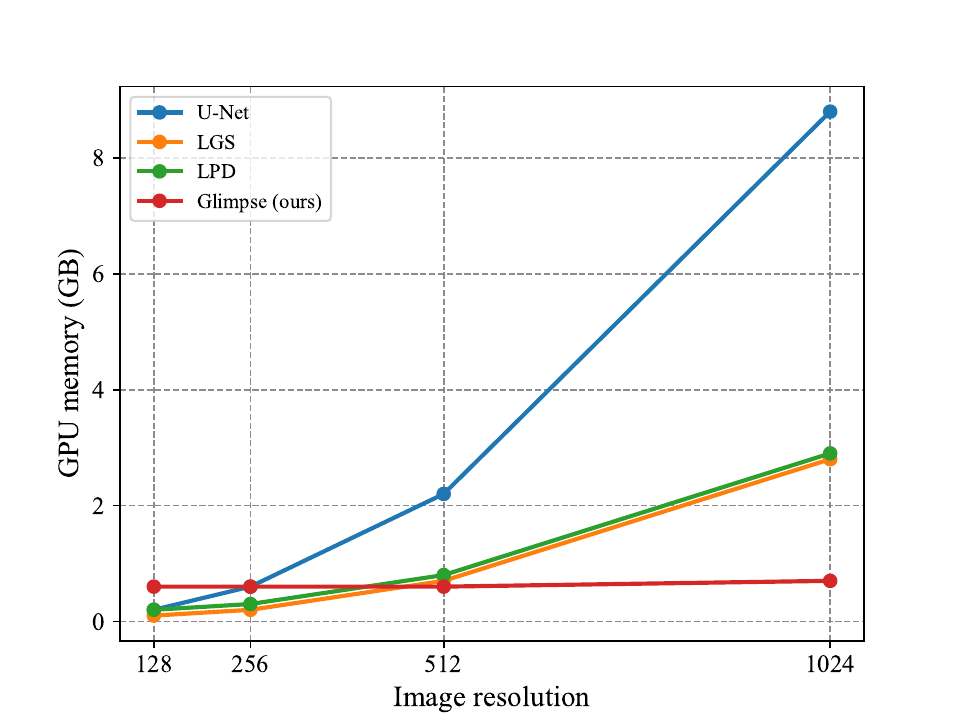}
    \caption{Memory footprint  (10 images)}
    \label{fig: gpu_test}
    \end{subfigure}%
    \begin{subfigure}{0.5\textwidth}
    \centering
    \includegraphics[width=1.05\textwidth]{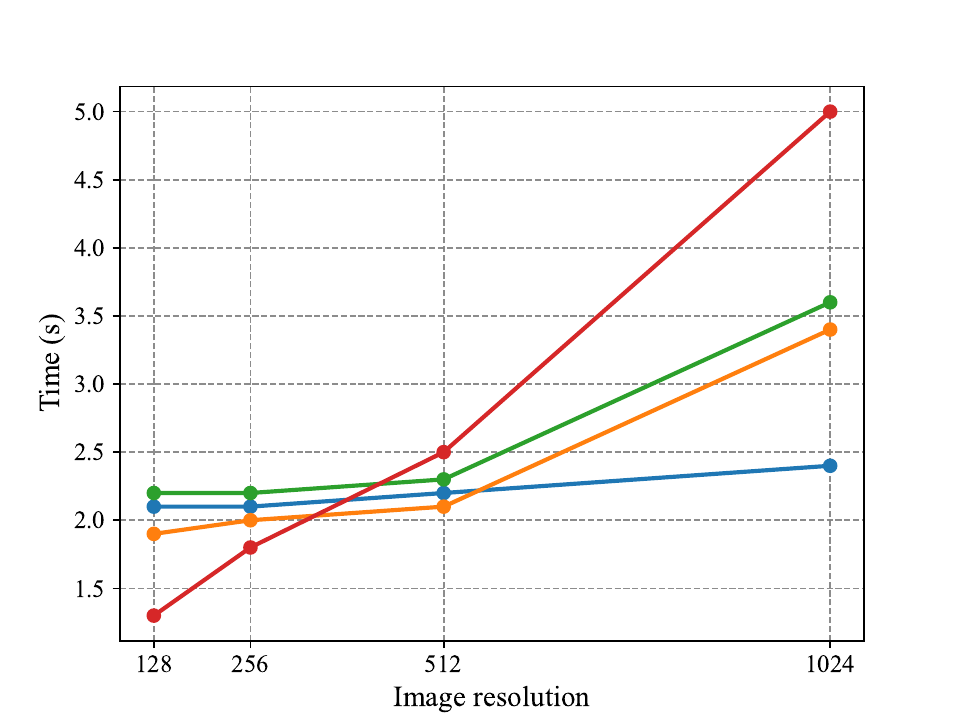}
    \caption{Inference time (10 images)}
    \label{fig: time_test}
    \end{subfigure}
    \caption{The memory and time requirements during inference for different models.}
    \label{fig: inference computation}
\end{figure*}

The fact that LPD far outperforms U-Net on OOD data is a testament to the benefits of incorporating the forward operator in the architecture. \rev{However, evaluating the Radon transform and its adjoint can become prohibitively expensive for large images, as it implies storing multiple copies of the same size as the original image. It can be partially mitigated by reducing the number of iterations in the associated iterative reconstruction scheme but at the cost of a significant deterioration in reconstruction quality. In this section, we compare the training memory and time requirements of different models at different resolutions, for 500 iterations with batch size 64.} We report the maximum use of GPU memory and the time needed to complete the training and inference. As evident from Figure \ref{fig: training computation}, the success of LPD and LGS comes at the cost of very unfavorable training memory and time complexity which rapidly worsens with resolution.
\rev{On the other hand, the memory needed to train {\dlt} is almost independent from image resolution. Remarkably, {\dlt} needs only 5GB memory to train on $1024 \times 1024$ images---less than 1/16 of the memory typically needed
by standard CNNs for CT image reconstruction. This makes {\dlt} suitable for high dimensional reconstruction tasks in real-world applications.}

\rev{We next compare the computational efficiency of various models during inference. With {\dlt}, there is a trade-off between inference speed and memory usage: smaller batch sizes reduce memory consumption but slow down inference, whereas larger batch sizes enable faster inference at the cost of higher memory usage. In this experiment, we set the pixel batch size to 1024. Figure \ref{fig: inference computation} presents the memory footprints and runtimes of different models for reconstructing 10 samples. Although {\dlt} performs pixel-wise image synthesis, it remains comparable to other CNNs that recover the whole image at once. For further discussion on the computational cost and potential remedies, please refer to Section \ref{sec: limitations}.}

Finally, we study the performance of {\dlt} (MultiMLP) on \rev{higher-resolution CT reconstruction. We train on the LoDoPaB-CT dataset at resolution $512 \times 512$, using \revv{90} projections with 40dB measurement noise. \revv{For this experiment, we use a larger MultiMLP with hidden layer dimension 400 to enhance the quality of reconstructions.}} Figure~\ref{fig: 512_results} shows the performance of {\dlt} on in-distribution and OOD samples, \revv{along with the pixel-wise absolute error maps between the reconstructions and ground truth images.} This experiment demonstrates that our proposed framework can achieve strong performance in realistic high resolutions.

\begin{figure*}
\centering
\begin{subfigure}{.5\textwidth}
  \centering
 \includegraphics[width=0.9\textwidth]{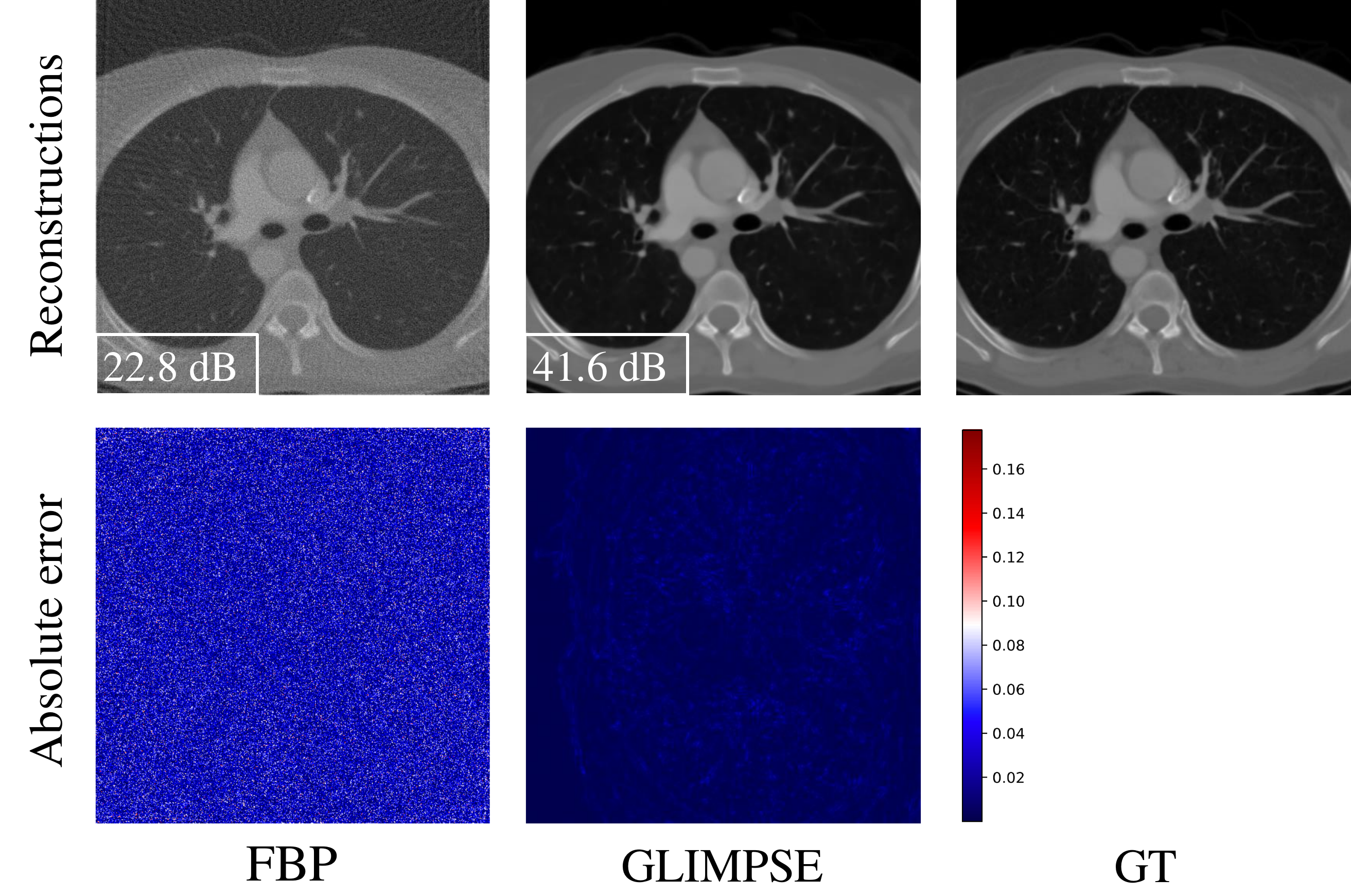}
\caption{In-distribution chest samples}
\label{fig: 512_test}
\end{subfigure}%
\begin{subfigure}{.5\textwidth}
\centering
\includegraphics[width=0.9\textwidth]{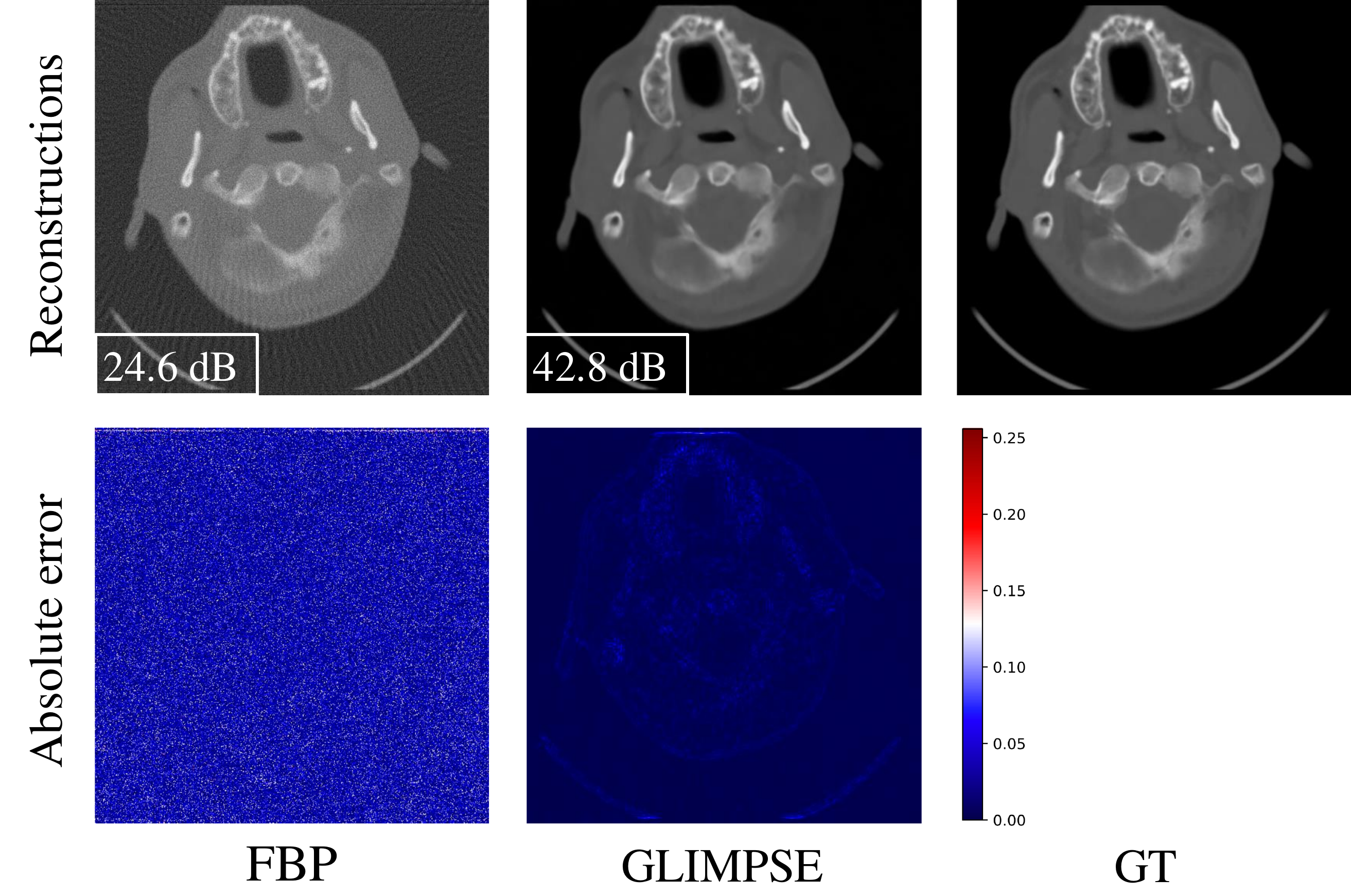}
\caption{OOD brain samples}
\label{fig: 512_ood}
\end{subfigure}
\caption{{\dlt}'s performance in resolution $512 \times 512$ trained on chest training data with $r = 90$ projections and 40dB noise. \rev{We indicate PSNRs between the reconstructions and the ground truth along with the pixel-wise absolute error maps.}}
\label{fig: 512_results}
\end{figure*}

\subsection{Learned Filter}
\label{sec: learned filter}

\begin{figure}
\centering
    \includegraphics[width = 0.5\textwidth]{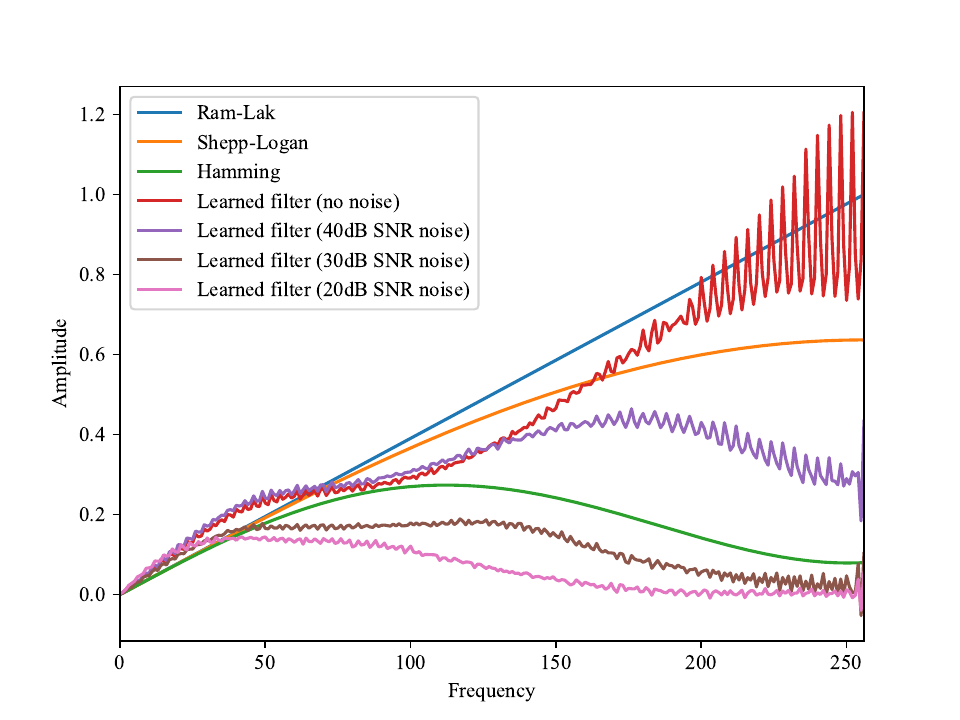}
    \caption{The learned filter for datasets with different noise levels, all the filtered are initialized by Ram-Lak filter in {\dlt} architecture. By increasing the noise level, the filter assigns smaller amplitudes for high-frequencies to suppress the noise and aligns with the optimality of the Ram-Lak filter for noise-free complete measurements shown in Section~\ref{app: optimal filter}.}
    \label{fig: learned filters}
\end{figure}

In this section, we study the learnable filter introduced in Section~\ref{sec: adaptive filter} across datasets with different measurement noise levels. This provides useful signal processing insights into how the properties of the learned filter are influenced by varying noise levels. In Figure~\ref{fig: learned filters} we show the frequency response of the learned filters, alongside standard hand-crafted filters such as Ram-Lak, Shepp-Logan, and Hamming. The learned filters are trained jointly with the MLPs in {\dlt}. As expected (see also the discussion in Appendix~\ref{app: optimal filter}), the learned filter for noise-free measurements is similar to the Ram-Lak filter, with a relatively high amplitude in high frequencies. As the noise level increases (by decreasing the noise SNR), the filter progressively takes smaller values in high frequencies to suppress the noise. This shows that {\dlt} can indeed autonomously adapt the characteristics of the filter according to noise (and other characteristics) in the training data. \rev{We additionally observe that training {\dlt} with a learnable filter leads to much faster convergence compared to a fixed filter (such as the Ram-Lak) while achieving comparable (or slightly better) reconstruction quality. Reconstructed images for different noise levels are presented in Figure~\ref{fig: noise_analysis}.
}

\begin{figure*}
\centering
\begin{subfigure}{.9\textwidth}
  \centering
 \includegraphics[width=0.99\textwidth]{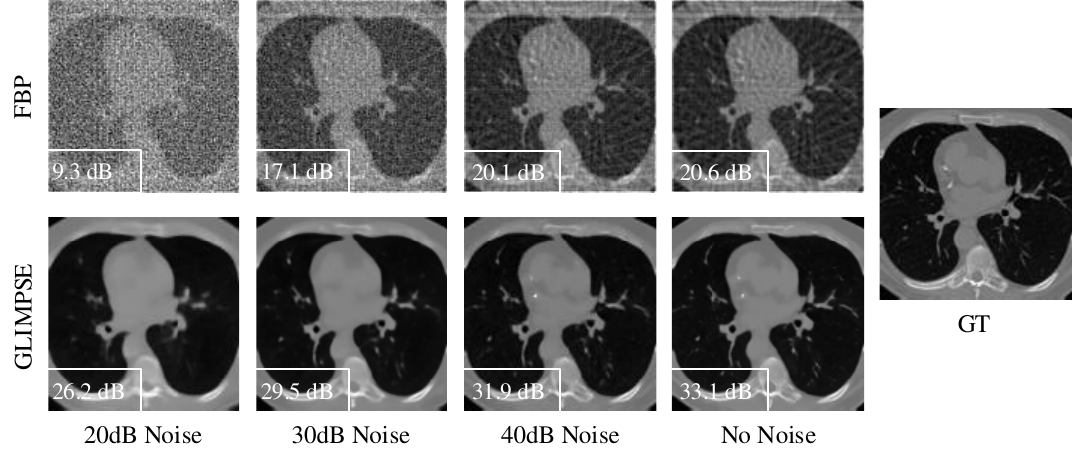}
\caption{In-distribution chest samples}
\label{fig: noise_test}
\end{subfigure}
\\[8mm]
\begin{subfigure}{.9\textwidth}
\centering
\includegraphics[width=0.99\textwidth]{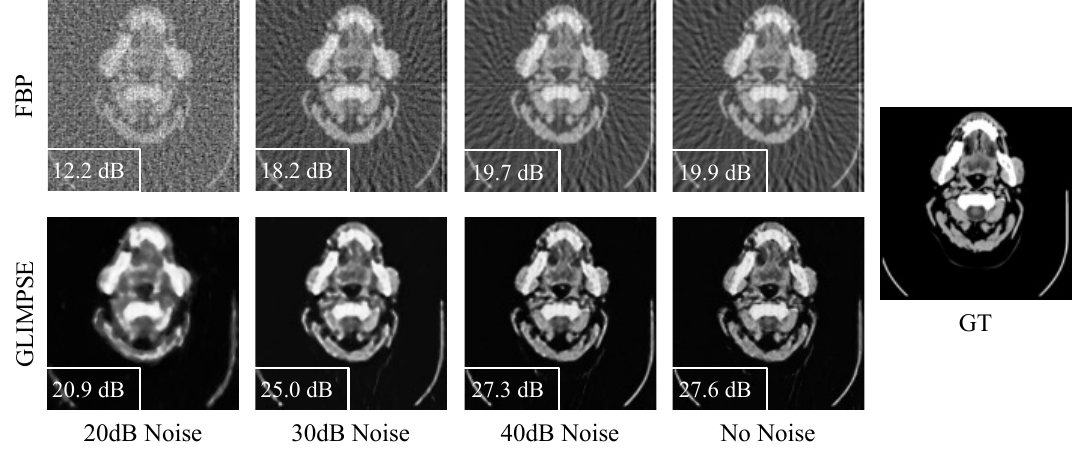}
\caption{Out-of-distribution brain samples}
\label{fig: noise_ood}
\end{subfigure}
\caption{{\dlt} performance on in-distribution and OOD data for different measurement noise levels with $r = 30$ projections. \rev{We indicate PSNRs between the reconstructions and the ground truth.}}
\label{fig: noise_analysis}
\end{figure*}

\rev{
\subsection{Influence of the Number of Projections}
\label{sec: ablation_proj}
As mentioned in Section \ref{sec: MultiMLP}, {\dlt} (MultiMLP) can process measurements with large number of projections $r$. To show the effectivity of the proposed architecture, we study the performance of {\dlt} (MultiMLP) for different number of projections while we have 30dB measurement noise. Separate {\dlt} (MultiMLP) models were trained on datasets with varying numbers of projections. Figure \ref{fig: ablation_proj_analysis} shows the reconstructions for different number of projections.}

\begin{figure*}
    \centering
    \begin{subfigure}{.9\textwidth}
      \centering
     \includegraphics[width=0.99\textwidth]{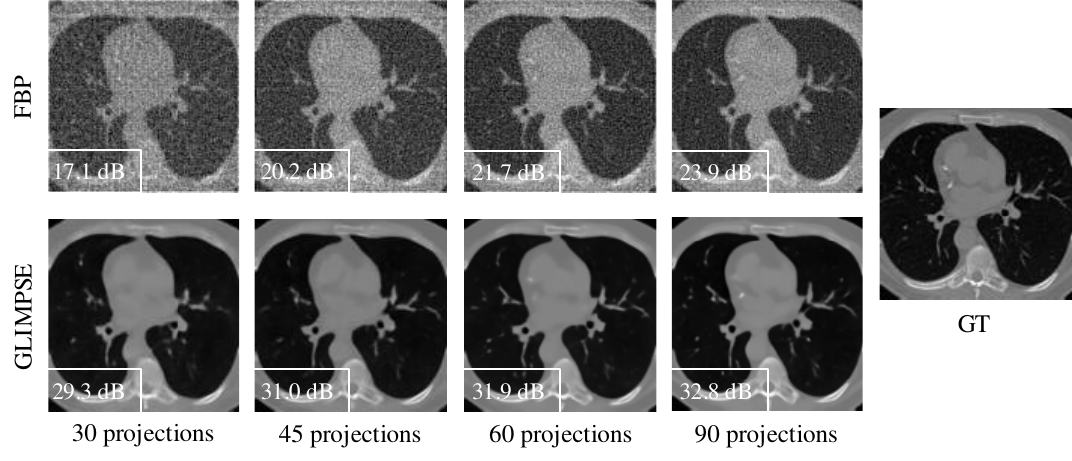}
    \caption{In-distribution chest samples}
    \label{fig: ablation_proj_test}
    \end{subfigure}
    \\[8mm]
    \begin{subfigure}{.9\textwidth}
    \centering
    \includegraphics[width=0.99\textwidth]{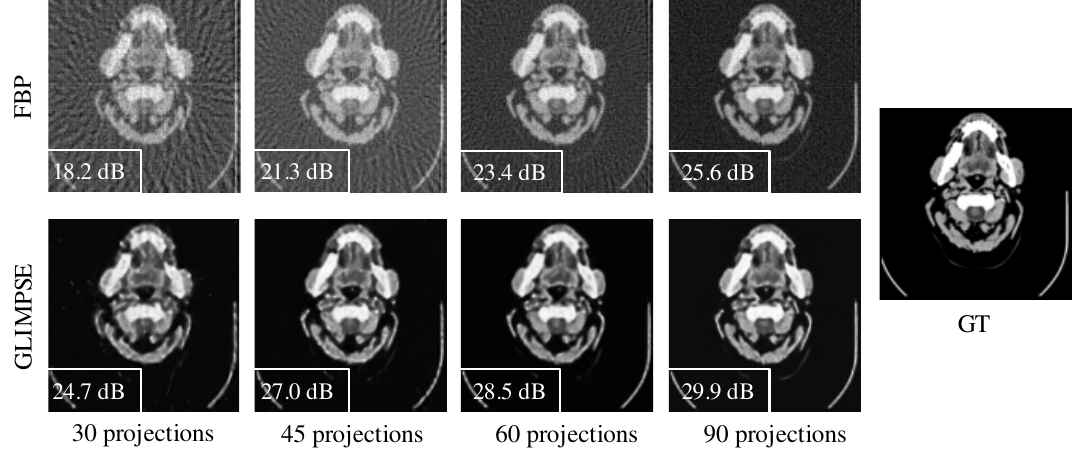}
    \caption{Out-of-distribution brain samples}
    \label{fig: ablation_proj_ood}
    \end{subfigure}
    \caption{{\dlt} performance on in-distribution and OOD data for different number of projections with measurement noise 30dB. \rev{We indicate PSNRs between the reconstructions and the ground truth.}}
    \label{fig: ablation_proj_analysis}
\end{figure*}

\rev{
\subsection{Influence of the Neighborhood Size}
\label{sec: window size}
In this section, we analyze the significance of contextual information on {\dlt}'s performance by varying the number of neighboring pixels (patch size) $K=C^2$. Table~\ref{tab: window size} presents the performance of {\dlt} trained with different patch sizes $K$ on both in-distribution and out-of-distribution (OOD) samples. The results demonstrate that {\dlt} with $K = 3 \times 3$ significantly outperforms the model without contextual information ($K=1$). Moreover, we see that the reconstruction quality tends to reach a saturation point beyond a certain patch size. This observation can inform the optimal choice of context size.}

\begin{table}
\renewcommand{\arraystretch}{1.3}
\caption{Reconstruction quality in PSNR (dB) for {\dlt} trained with various number of neighboring pixels.}
\label{tab: window size}

    \centering
    \resizebox{0.45\textwidth}{!}{%
    \begin{tabular}{@{}l|cccc|c@{}}
    \hline
    Patch size ($K=C^2$)& In-distribution & OOD & Num params \\
    \hline
    $1 \times 1$   &  25.6  &  18.3 & 280k \\
    $3 \times 3$   & 30.2   &  24.2 & 345k \\
    $5 \times 5$   &  30.7  & 24.9  & 470k \\
    $7 \times 7$   &  30.8  & 24.9  & 650k \\
    $9 \times 9$   &  \textbf{30.9}  &  \textbf{25.1}  & 900k \\
    $11 \times 11$   & \textbf{30.9}   &  25.0 & 1200k \\
    \hline
    \end{tabular}}

\end{table}

\section{Discussions and Conclusion}

We have demonstrated that \dlt\ --- a neural network adapted to the geometry of computed tomography---can be much more robust, much more scalable, and much less data hungry CT reconstructions than the leading CNN-based (and model-based) methods. Our experiments substantiate the key claims made in the Introduction. First, by exploiting local sinusoidal patches in the sinogram, \dlt\ handles out-of-distribution data more gracefully than leading CNN-based methods. 
Second, since training is done at the pixel level, \dlt's GPU memory usage remains nearly constant as the image resolution grows, making it scalable to 1024x1024 or higher without requiring prohibitively large hardware. 
Finally, the learnable filter and differentiable projection angles make \dlt\ highly flexible in practice, able to handle noisy datasets and even uncalibrated systems where sensor geometry is only partially known. \rev{This last feat is facilitated by the robustness and numerical efficiency of \dlt.}

% \noindent \textbf{Limitations:}
\subsection{Limitations}
\label{sec: limitations}
\rev{
{\dlt} can be trained on GPUs with significantly smaller memory than baselines, which enables very high-dimensional image reconstruction, but its computational cost at inference scales with the number of pixels. Recent work \cite{he2023dynamic, he2024latent} has improved the efficiency of continuous image representation in INRs by increasing shared computations across coordinates, thereby reducing computational complexity. Adapting these methods within {\dlt} could potentially decrease inference time. We note, however, that even with the current architecture inference is essentially real-time.
}

Another challenge is that memory and compute cost increase with the number of projections $r$. A possible alternative to the standard MLP or MultiMLP architectures which are the culprit for this is to use mixture-of-experts layers \cite{Shazeer2017, Riquelme2021scaling, Fedus2022review}, which selectively employ smaller MLPs for processing inputs. This approach is an effective drop-in replacement for standard MLP layers of language transformers \cite{vaswani2017attention} and vision transformers \cite{dosovitskiy2020image}; we leave it to future work to test its effectiveness in local CT reconstruction.

\revv{Since the dimensionality of the MLP network is fixed, {\dlt} can only process data with the specific number of projections it was trained on. This limitation is common in most deep-learning models for tomographic reconstruction, including model-based architectures like LPD and LGS. Here, however, it arises specifically from the MLP structure. Architectures such as transformers \cite{vaswani2017attention}, which can process data sequentially, are likely the right solution. }

\subsection{Looking forward: locality for other imaging modalities}
\label{sec: future works}
\rev{{\dlt} can be generalized to various imaging problems where the forward operator involves line integrals, such as fan-beam computed tomography (CT) \cite{kak2001principles}. In fan-beam CT, X-rays diverge from a source point in a fan-shaped pattern as they pass through the object, a configuration commonly used in clinical CT scanners due to its efficiency in capturing larger areas. As detailed in \cite[\S 5.11.6]{gonzalez2006digital}, although the fan-beam CT forward operator is more complex than that of parallel-beam CT, it retains a local structure that can be exploited to develop a local processing reconstruction pipeline, similar to {\dlt}.
}
{\dlt} can also be extended to other imaging modalities with a local forward operator including photoacoustic~\cite{jathoul2015deep,yao2015high} and cryo-electron tomography (cryoET)~\cite{doerr2017cryo,debarnot2024ice}. Its future full-3D adaptation may yield efficient architectures that resolve the fundamental memory issues with applications of deep learning in 3D medical imaging. 
% {\color{red} Leaving this sentence here just for discussion}
This extension is particularly interesting given the ability of {\dlt} to operate locally and its near-fixed memory requirement across resolution, which makes it a potentially strong choice for full 3D problems.

% \newpage

\appendix

% \subsection{Network Architecture and Training Details}\label{app:network architecture}

\subsection{Learned Sensor Geometry}
\label{app: learnable geometry}

CT imaging algorithms such as  FBP~\cite{feldkamp1984practical}, SART~\cite{andersen1984simultaneous}, LGS~\cite{Adler2017solving}, LPD~\cite{Adler2018learned} assume that the projection angles $\{\alpha_m\}_{m=1}^r$ are known. In an uncalibrated system where sensor geometry is different from what the algorithms assume, the quality of reconstruction deteriorates~\cite{lunz2021learned, hauptmann2023model}. {\dlt} allows directly optimizing the projection angles during training. We thus jointly optimize $\{\alpha_m\}_{m = 1}^r$ with other trainable parameters in \eqref{eq: training}. This additional angle estimation incurs a very modest computational cost. 
% {\color{red} can we do this at test time?}

In the absence of calibration, we cannot expect to have paired ground truth images. In the following experiments, we only want to showcase the possibility to differentiably optimize over angles in {\dlt} so we assume having access to paired data (while simulating the uncalibrated forward operator). In practice, we could use a self-supervised loss, for example, based on equivariance~\cite{chen2021equivariant}.

\begin{figure}
\centering
    \includegraphics[width = 0.45\textwidth]{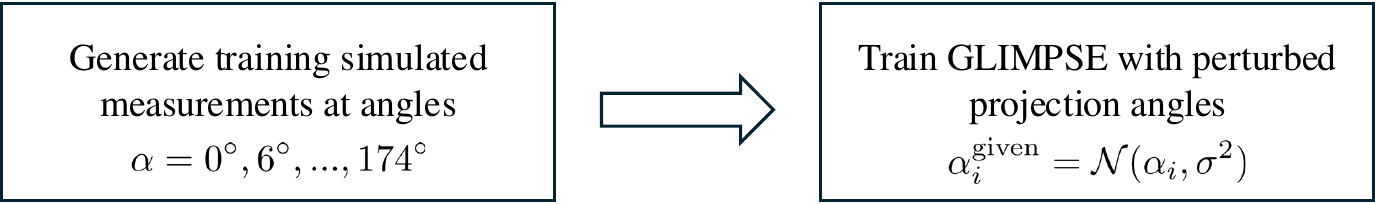}
    \caption{The experimental arrangement for conducting uncalibrated imaging experiments}
    \label{fig: uncalibrated setup}
\end{figure}

\begin{figure*}
\centering
\begin{subfigure}{.45\textwidth}
  \centering
 \includegraphics[width=0.99\textwidth]{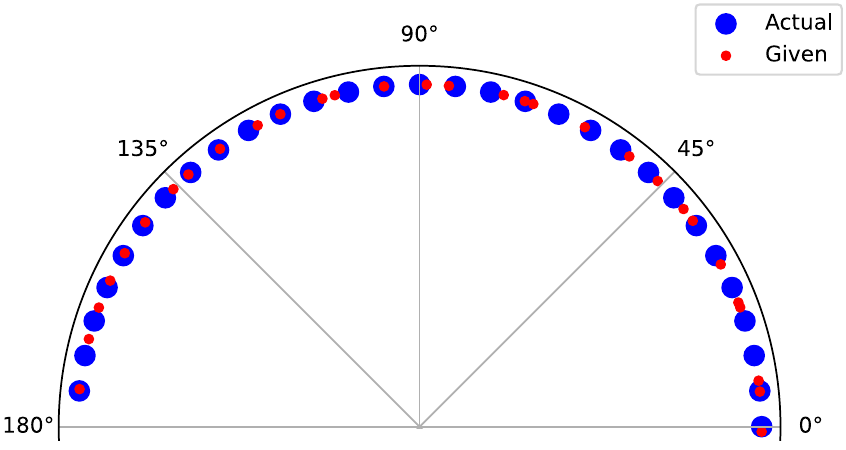}
\caption{Given projection angles}
\label{fig: given_sensors_uncalibrated_random}
\end{subfigure}%
\begin{subfigure}{.45\textwidth}
\centering
\includegraphics[width=0.99\textwidth]{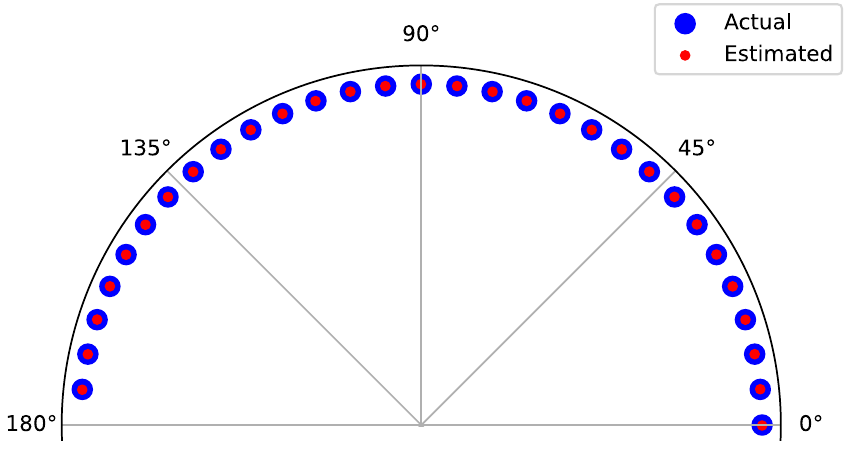}
\caption{Estimated projection angles}
\label{fig: estimated_sensors_uncalibrated_random}
\end{subfigure}
\\[5mm]
\begin{subfigure}{0.8\textwidth}
\centering
\includegraphics[width=\textwidth]{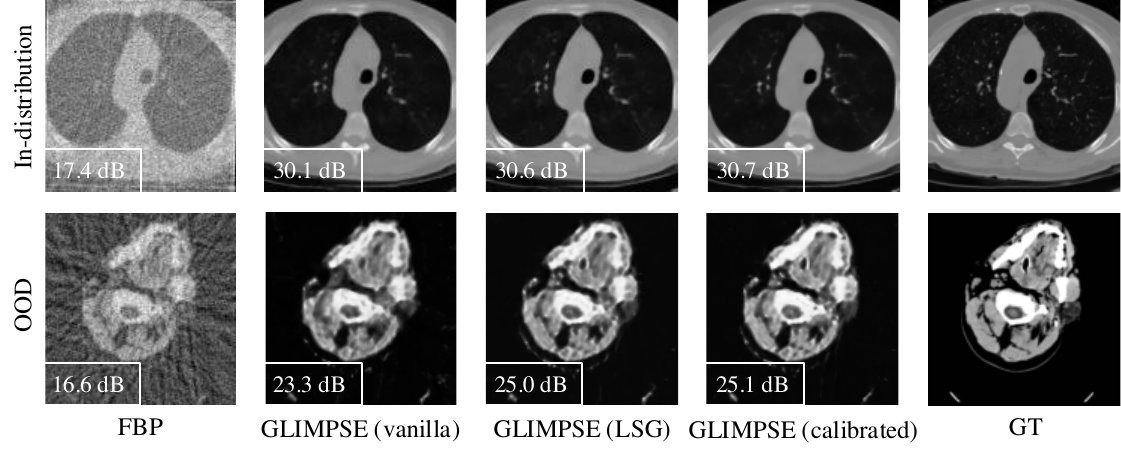}
\caption{Reconstructions}
\label{fig: reconstructions_uncalibrated_random}
\end{subfigure}
\caption{Estimated sensor geometry by {\dlt} (LSG) and reconstructions for an uncalibrated system with a random sensor shift; as expected, the learnable sensor geometry can effectively learn the projection angles and exhibits excellent robustness with no degradation under such a big model mismatch and measurement noise (30dB). \rev{We indicate PSNRs between the reconstructions and the ground truth.}}
\label{fig:sensors-locs-random-shift}
\end{figure*}

We assess the performance of {\dlt} in situations with mismatched projection orientations. In the following experiments, we place $r = 30$ sensors uniformly around the object at angles $\alpha = 0^\circ,6^\circ,...,174^\circ$.
We compare three models: 1) {\dlt} (vanilla), with no learnable sensor geometry, 2) {\dlt} (LSG), incorporating the proposed learned sensor geometry, and 3) {\dlt} (calibrated), operating under ideal conditions with no model mismatch (informed with correct projection angles).
\rev{Figure \ref{fig: uncalibrated setup} demonstrates the experimental procedure for  uncalibrated imaging experiments.}
We let the {\dlt} (LSG) learn the projection angles from the training data where the optimized values $\{\alpha_m\}_{i=1}^r$ obtained through training can provide a reliable estimate of the actual projection angles.

\subsubsection{Uncalibrated system with random sensor shifts}
\label{sec: uncal_random}
As shown in Figure~\ref{fig: given_sensors_uncalibrated_random}, we randomly perturb projection angles by a normally distributed error so that $\alpha_i^\text{given} = \mathcal{N}(\alpha_i, \sigma^2)$; we set $\sigma = 2^\circ$. \rev{We train {\dlt} (vanilla) on this uncalibrated dataset; despite this mismatch in the forward operator, {\dlt} (vanilla) can still generate high-quality reconstructions for in-distribution test data (only 0.6 dB drop compared to the calibrated system) as shown in the first row of the second column in Figure \ref{fig: reconstructions_uncalibrated_random}. However, the mismatch in the forward operator does not allow {\dlt} (vanilla) to generalize well on OOD data (1.8 dB drop compared to the calibrated system) as shown in the second row of the second column in Figure \ref{fig: reconstructions_uncalibrated_random}. To address this issue,} we initialize the projection angles $\{\alpha_m\}_{i = 1}^r$ in the {\dlt} (LSG) architecture with $\alpha_i^\text{given}$.
Figure~\ref{fig: estimated_sensors_uncalibrated_random} shows the estimated projection angles obtained through training---{\dlt} (LSG) accurately recovers the angles even in the presence of 30 dB measurement noise. As shown in Figure~\ref{fig: reconstructions_uncalibrated_random}, this accurate estimation of projection angles results in high-quality reconstructions by {\dlt} (LSG) comparable with the network trained in an ideal calibrated system.

\subsubsection{Blind inversion with no information from projection angles}
\label{sec: uncal_blind}
We consider the blind scenario where the model operates without any prior knowledge of the sensor geometry making inversion challenging. As shown in Figure~\ref{fig: given_sensors_random}, we initialize the projection angles $\{\alpha_m\}_{i = 1}^r$ in the {\dlt} (LSG) architecture with random values.
The estimated projection angles are shown in Figure~\ref{fig: estimated_sensors_blind}, highlighting {\dlt} (LSG)'s ability for data-driven sensor geometry estimation. 
Figure~\ref{fig: blind_results} presents the reconstructions achieved by {\dlt} in both its vanilla and LSG versions. As expected, FBP and the {\dlt} (vanilla) show poor reconstructions due to the missing sensor geometry information. On the other hand, {\dlt} (LSG) could accurately reconstruct both in-distribution and OOD samples. Remarkably, these results are comparable to those achieved by the calibrated {\dlt} with informed projection angles.
% Table~\ref{tab: uncalibrated system} provides quantitative results averaged on 64 test samples.

\begin{figure*}
\centering
\begin{subfigure}{.45\textwidth}
  \centering
 \includegraphics[width=0.99\textwidth]{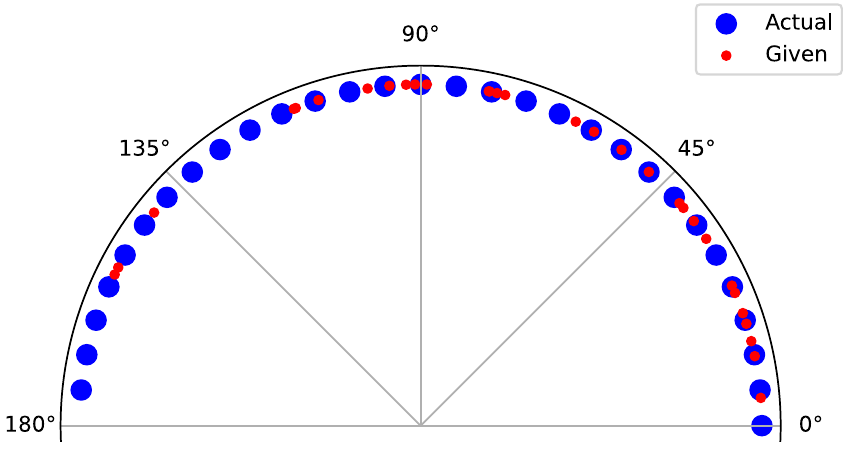}
\caption{Given projection angles}
\label{fig: given_sensors_random}
\end{subfigure}%
\begin{subfigure}{.45\textwidth}
\centering
\includegraphics[width=0.99\textwidth]{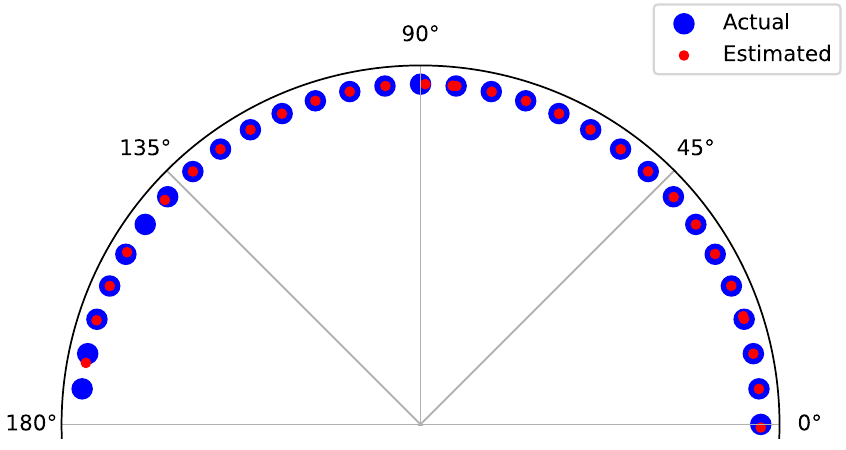}
\caption{Estimated projection angles}
\label{fig: estimated_sensors_blind}
\end{subfigure}
\\[5mm]
\begin{subfigure}{\textwidth}
\centering
\includegraphics[width=0.8\textwidth]{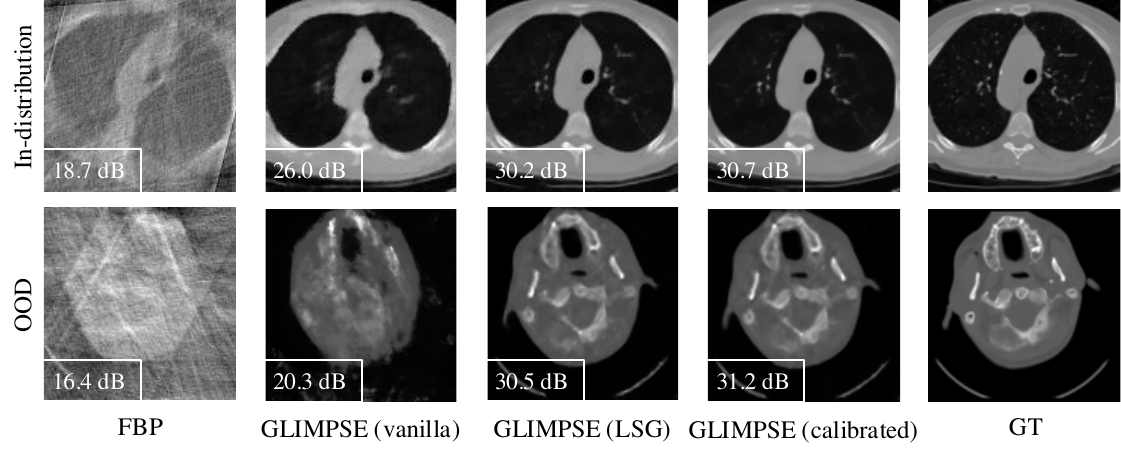}
\caption{High-quality reconstructions by {\dlt} (LSG) despite having no information from sensor geometry.}
\label{fig: blind_results}
\end{subfigure}
\caption{Estimated sensor geometry by {\dlt} (LSG) and reconstructions for blind inversion; {\dlt} (LSG) was initialized with random projection angles $\{\alpha_m\}_{i = 1}^r$ (a) could reliably estimate the projection angles purely from data (b) resulting in high-quality reconstructions (c). \rev{We indicate PSNRs between the reconstructions and the ground truth.}}
\label{fig: sensors_locs_blind}
\end{figure*}

\subsection{Optimal Filter for FBP Reconstruction}\label{app: optimal filter}

\begin{proposition}[Reconstruction for continuous Radon transform]\label{prop:optimal_filter}
We have the following identity
    \begin{align*}
    f(x,y)= \int_{0}^{\pi} Rf(\theta,\cdot)\star \psi d\theta,
\end{align*}
where $\psi$ is the filter that has for Fourier transform $|\cdot|$.
\end{proposition}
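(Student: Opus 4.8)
The plan is to derive this classical filtered-backprojection identity from the Fourier Slice Theorem, which identifies the one-dimensional Fourier transform of each projection $Rf(\theta,\cdot)$ with a radial slice of the two-dimensional Fourier transform $\hat f$ of $f$. I would work with $f$ in the Schwartz class, where all transforms and every use of Fubini are legitimate, and read the ramp-filtered projection $Rf(\theta,\cdot)\star\psi$ as the inverse one-dimensional Fourier transform of $|\omega|\,\mathcal{F}_1[Rf(\theta,\cdot)](\omega)$, since $\psi$---whose Fourier transform is the non-integrable symbol $|\cdot|$---is only a tempered distribution.

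First I would establish the Fourier Slice Theorem. Writing $\mathcal{F}_1$ for the Fourier transform in the detector variable $t$ and inserting the definition \eqref{eq:radon_def}, the rotation $(z,t)\mapsto(x,y)$---which has unit Jacobian and inverse $t=y\cos\theta-x\sin\theta$---collapses the line integral together with the $t$-integral into a single integral over the plane, giving
\[
  \mathcal{F}_1[Rf(\theta,\cdot)](\omega)=\hat f\big(-\omega\sin\theta,\ \omega\cos\theta\big),
\]
i.e.\ $\hat f$ sampled along the line through the unit vector $\hat n(\theta)=(-\sin\theta,\cos\theta)$. Crucially this holds for every $\omega\in\R$, including negative frequencies.

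Next I would insert the two-dimensional Fourier inversion of $f$ and pass to polar frequency coordinates $\xi=\omega\,\hat n(\theta)$ with $\theta\in[0,\pi)$ and $\omega\in\R$; because $\hat n(\theta+\pi)=-\hat n(\theta)$, this sweeps the plane exactly once and carries area element $|\omega|\,d\omega\,d\theta$. The Jacobian factor $|\omega|$ is precisely the ramp-filter symbol. Using $\hat n(\theta)\cdot(x,y)=y\cos\theta-x\sin\theta=t$ and substituting the slice identity, the inversion integral factors as
\[
  f(x,y)=\int_0^\pi \mathcal{F}_1^{-1}\big[\,|\omega|\,\mathcal{F}_1[Rf(\theta,\cdot)]\,\big](t)\,d\theta,
\]
and recognizing the inner inverse transform as $\big(Rf(\theta,\cdot)\star\psi\big)(t)$ yields the claim; the $2\pi$ prefactors vanish under the normalization implicit in the statement ``$\psi$ has Fourier transform $|\cdot|$''.

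The main obstacle is analytic rather than algebraic: since $|\cdot|$ is not integrable, $\psi$ is not an ordinary convolution kernel, so $Rf(\theta,\cdot)\star\psi$ must be understood as a distributional pairing, and the interchange of the radial and angular integrations must be justified. For $f$ Schwartz this is clean---the projections $Rf(\theta,\cdot)$ are themselves rapidly decreasing in $t$, their Fourier transforms decay fast, and $|\omega|\,\mathcal{F}_1[Rf(\theta,\cdot)](\omega)$ is absolutely integrable---so every integral above converges absolutely and Fubini applies; the general case then follows by density. The only bookkeeping care is keeping the slice direction $\hat n(\theta)$ consistent with the paper's normal-vector parameterization, which I fix once in the Fourier Slice step.
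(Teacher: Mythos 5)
Your proof is correct and follows essentially the same route as the paper's: two-dimensional Fourier inversion written in polar frequency coordinates, the Fourier slice theorem to replace the radial slices of $\hat f$ by one-dimensional transforms of the projections, and identification of the polar Jacobian $|\omega|$ with the ramp-filter symbol. The only (cosmetic) difference is that you cover the frequency plane with a signed radius $\omega\in\R$ and $\theta\in[0,\pi)$ directly, whereas the paper integrates over $\theta\in[0,2\pi)$ and folds using the symmetry $Rf(\theta,r)=Rf(\theta+\pi,-r)$; your added care about the Schwartz-class setting and the distributional meaning of $\psi$ only strengthens the argument.
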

\begin{proof}
Let $\p=(x,y)$, $\xib = (\xi_1,\xi_2)$. We have
\begin{align*}
    f(x,y) = & \int_{-\infty}^{+\infty}\int_{-\infty}^{+\infty} \mathcal{F}_{2D}(f)(\xi_1,\xi_2) \exp(2i\pi \langle \xib,\p \rangle )d\xib\\
    = & \int_{0}^{+\infty}\int_{0}^{2\pi} \mathcal{F}_{2D}(f)(r\cos(\theta),r\sin(\theta)) \\
    & \exp(2i\pi r \langle \kk,\p \rangle )rdrd\theta,
\end{align*}
by doing a change of variable in polar coordinates, where $\kk=(\cos(\theta),\sin(\theta)).$
Observe that $\mathcal{F}_{2D}(f)(r\cos(\theta),r\sin(\theta))$ is the Fourier Transform of $f$ along the line of direction $\kk$. By the Fourier slice theorem~\cite{kak2001principles}, we have 
\begin{align*}
    \mathcal{F}_{2D}(f)(r\cos(\theta),r\sin(\theta)) = \mathcal{F}_{1D}(Rf(\theta,\cdot))(r)
\end{align*}
By symmetry of the Radon transform, we have $Rf(\theta,r)=Rf(\theta+\pi,-r)$.
Finally, 
\begin{align*}
    f(x,y) = & \int_{-\infty}^{+\infty}\int_{0}^{\pi} \mathcal{F}_{1D}(Rf(\theta,\cdot))(r) \exp(2i\pi r \langle \kk,\p \rangle ) \\ & |r|drd\theta
    =  \int_{0}^{\pi} \mathcal{F}_{1D}^{-1}\left(\mathcal{F}_{1D}(Rf(\theta,\cdot))\odot |\cdot|\right) d\theta.
\end{align*}
This shows that 

\begin{align*}
    f(x,y)= \int_{0}^{\pi} \left(Rf(\theta,\cdot)\star \psi\right)(\langle \kk,\p\rangle) d\theta,
\end{align*}
where $\psi$ is the filter that has for Fourier transform $|\cdot|$.

\end{proof}

\subsection{Proof of Proposition \ref{prop:sine}}\label{app:proof_sine}
\begin{proof}
Using the definition of the Radon transform in \eqref{eq:radon_def}, we have
\begin{align*}
  Rf(\alpha, t) = \int_{-\infty}^{+\infty}\delta(& z\cos(\alpha) - t\sin(\alpha)-x,\\
  &z\sin(\alpha)+t\cos(\alpha) - y)dz.
\end{align*}
Solving $ z\cos(\alpha) - t\sin(\alpha)-x=0$ for $z$ leads to $$z= \frac{t\sin(\alpha)+x}{\cos(\alpha)}.$$
Then, solving $z\sin(\alpha)+t\cos(\alpha) - y = 0$ for $t$, using the previous expression for $z$ leads to $$t = y \cos(\alpha) - x\sin(\alpha).$$    
\end{proof}

% \newpage
\bibliographystyle{IEEEtran}
\bibliography{arx}

% Generated by IEEEtran.bst, version: 1.14 (2015/08/26)
\begin{thebibliography}{10}
\providecommand{\url}[1]{#1}
\csname url@samestyle\endcsname
\providecommand{\newblock}{\relax}
\providecommand{\bibinfo}[2]{#2}
\providecommand{\BIBentrySTDinterwordspacing}{\spaceskip=0pt\relax}
\providecommand{\BIBentryALTinterwordstretchfactor}{4}
\providecommand{\BIBentryALTinterwordspacing}{\spaceskip=\fontdimen2\font plus
\BIBentryALTinterwordstretchfactor\fontdimen3\font minus \fontdimen4\font\relax}
\providecommand{\BIBforeignlanguage}[2]{{%
\expandafter\ifx\csname l@#1\endcsname\relax
\typeout{** WARNING: IEEEtran.bst: No hyphenation pattern has been}%
\typeout{** loaded for the language `#1'. Using the pattern for}%
\typeout{** the default language instead.}%
\else
\language=\csname l@#1\endcsname
\fi
#2}}
\providecommand{\BIBdecl}{\relax}
\BIBdecl

\bibitem{wang2020deep}
G.~Wang, J.~C. Ye, and B.~De~Man, ``Deep learning for tomographic image reconstruction,'' \emph{Nature Machine Intelligence}, vol.~2, no.~12, pp. 737--748, 2020.

\bibitem{ronneberger2015u}
O.~Ronneberger, P.~Fischer, and T.~Brox, ``U-net: Convolutional networks for biomedical image segmentation,'' in \emph{Medical Image Computing and Computer-Assisted Intervention--MICCAI 2015: 18th International Conference, Munich, Germany, October 5-9, 2015, Proceedings, Part III 18}.\hskip 1em plus 0.5em minus 0.4em\relax Springer, 2015, pp. 234--241.

\bibitem{Jin2017deep}
K.~H. {Jin}, M.~T. {McCann}, E.~{Froustey}, and M.~{Unser}, ``Deep convolutional neural network for inverse problems in imaging,'' \emph{IEEE Transactions on Image Processing}, vol.~26, no.~9, pp. 4509--4522, 2017.

\bibitem{mccann2017convolutional}
M.~T. McCann, K.~H. Jin, and M.~Unser, ``Convolutional neural networks for inverse problems in imaging: A review,'' \emph{IEEE Signal Processing Magazine}, vol.~34, no.~6, pp. 85--95, 2017.

\bibitem{davoudi2019deep}
N.~Davoudi, X.~L. De{\'a}n-Ben, and D.~Razansky, ``Deep learning optoacoustic tomography with sparse data,'' \emph{Nature Machine Intelligence}, vol.~1, no.~10, pp. 453--460, 2019.

\bibitem{Liu2022multiscale}
T.~Liu, A.~Chaman, D.~Belius, and I.~Dokmanić, ``Learning multiscale convolutional dictionaries for image reconstruction,'' \emph{IEEE Transactions on Computational Imaging}, vol.~8, pp. 425--437, 2022.

\bibitem{li2022noise}
D.~Li, Z.~Bian, S.~Li, J.~He, D.~Zeng, and J.~Ma, ``Noise characteristics modeled unsupervised network for robust ct image reconstruction,'' \emph{IEEE Transactions on Medical Imaging}, vol.~41, no.~12, pp. 3849--3861, 2022.

\bibitem{antun2020instabilities}
V.~Antun, F.~Renna, C.~Poon, B.~Adcock, and A.~C. Hansen, ``On instabilities of deep learning in image reconstruction and the potential costs of ai,'' \emph{Proceedings of the National Academy of Sciences}, vol. 117, no.~48, pp. 30\,088--30\,095, 2020.

\bibitem{Adler2017solving}
J.~Adler and O.~\"Oktem, ``Solving ill-posed inverse problems using iterative deep neural networks,'' \emph{Inverse Problems}, vol.~33, no.~12, p. 124007, Nov 2017.

\bibitem{Adler2018learned}
J.~{Adler} and O.~{\"Oktem}, ``Learned primal-dual reconstruction,'' \emph{IEEE Transactions on Medical Imaging}, vol.~37, no.~6, pp. 1322--1332, 2018.

\bibitem{gilton2019neumann}
D.~Gilton, G.~Ongie, and R.~Willett, ``Neumann networks for linear inverse problems in imaging,'' \emph{IEEE Transactions on Computational Imaging}, vol.~6, pp. 328--343, 2019.

\bibitem{maier2019learning}
A.~K. Maier, C.~Syben, B.~Stimpel, T.~W{\"u}rfl, M.~Hoffmann, F.~Schebesch, W.~Fu, L.~Mill, L.~Kling, and S.~Christiansen, ``Learning with known operators reduces maximum error bounds,'' \emph{Nature machine intelligence}, vol.~1, no.~8, pp. 373--380, 2019.

\bibitem{Hauptmann2020multi}
A.~{Hauptmann}, J.~{Adler}, S.~{Arridge}, and O.~{Öktem}, ``Multi-scale learned iterative reconstruction,'' \emph{IEEE Transactions on Computational Imaging}, vol.~6, pp. 843--856, 2020.

\bibitem{sahel2021deep}
Y.~B. Sahel, J.~P. Bryan, B.~Cleary, S.~L. Farhi, and Y.~C. Eldar, ``Deep unrolled recovery in sparse biological imaging,'' 2021.

\bibitem{Leuschner2021quantitative}
J.~Leuschner, M.~Schmidt, P.~S. Ganguly, V.~Andriiashen, S.~B. Coban, A.~Denker, D.~Bauer, A.~Hadjifaradji, K.~J. Batenburg, P.~Maass, and M.~van Eijnatten, ``Quantitative comparison of deep learning-based image reconstruction methods for low-dose and sparse-angle {CT} applications,'' \emph{Journal of Imaging}, vol.~7, no.~3, 2021.

\bibitem{feldkamp1984practical}
L.~A. Feldkamp, L.~C. Davis, and J.~W. Kress, ``Practical cone-beam algorithm,'' \emph{Josa a}, vol.~1, no.~6, pp. 612--619, 1984.

\bibitem{andersen1984simultaneous}
A.~H. Andersen and A.~C. Kak, ``Simultaneous algebraic reconstruction technique (sart): a superior implementation of the art algorithm,'' \emph{Ultrasonic imaging}, vol.~6, no.~1, pp. 81--94, 1984.

\bibitem{lunz2021learned}
S.~Lunz, A.~Hauptmann, T.~Tarvainen, C.-B. Schonlieb, and S.~Arridge, ``On learned operator correction in inverse problems,'' \emph{SIAM Journal on Imaging Sciences}, vol.~14, no.~1, pp. 92--127, 2021.

\bibitem{gupta2023differentiable}
S.~Gupta, K.~Kothari, V.~Debarnot, and I.~Dokmani{\'c}, ``Differentiable uncalibrated imaging,'' \emph{IEEE Transactions on Computational Imaging}, 2023.

\bibitem{aggarwal2018modl}
H.~K. Aggarwal, M.~P. Mani, and M.~Jacob, ``Modl: Model-based deep learning architecture for inverse problems,'' \emph{IEEE transactions on medical imaging}, vol.~38, no.~2, pp. 394--405, 2018.

\bibitem{hamoud2022beyond}
B.~Hamoud, Y.~Bahat, and T.~Michaeli, ``Beyond local processing: Adapting cnns for ct reconstruction,'' in \emph{European Conference on Computer Vision}.\hskip 1em plus 0.5em minus 0.4em\relax Springer, 2022, pp. 513--526.

\bibitem{khorashadizadeh2022funknn}
A.~Khorashadizadeh, A.~Chaman, V.~Debarnot, and I.~Dokmanić, ``Funknn: Neural interpolation for functional generation,'' in \emph{ICLR}, 2023.

\bibitem{graas2023just}
A.~Graas, S.~B. Coban, K.~J. Batenburg, and F.~Lucka, ``Just-in-time deep learning for real-time x-ray computed tomography,'' \emph{Scientific Reports}, vol.~13, no.~1, p. 20070, 2023.

\bibitem{ronchetti2020torchradon}
M.~Ronchetti, ``Torchradon: Fast differentiable routines for computed tomography,'' \emph{arXiv preprint arXiv:2009.14788}, 2020.

\bibitem{Kang2017deep}
E.~Kang, J.~Min, and J.~C. Ye, ``A deep convolutional neural network using directional wavelets for low-dose {X-ray} {CT} reconstruction,'' \emph{Medical physics}, vol.~44, no.~10, pp. e360--e375, 2017.

\bibitem{khorashadizadeh2023conditional}
A.~Khorashadizadeh, K.~Kothari, L.~Salsi, A.~A. Harandi, M.~de~Hoop, and I.~Dokmani{\'c}, ``Conditional injective flows for bayesian imaging,'' \emph{IEEE Transactions on Computational Imaging}, vol.~9, pp. 224--237, 2023.

\bibitem{li2019learning}
Y.~Li, K.~Li, C.~Zhang, J.~Montoya, and G.-H. Chen, ``Learning to reconstruct computed tomography images directly from sinogram data under a variety of data acquisition conditions,'' \emph{IEEE transactions on medical imaging}, vol.~38, no.~10, pp. 2469--2481, 2019.

\bibitem{wurfl2018deep}
T.~W{\"u}rfl, M.~Hoffmann, V.~Christlein, K.~Breininger, Y.~Huang, M.~Unberath, and A.~K. Maier, ``Deep learning computed tomography: Learning projection-domain weights from image domain in limited angle problems,'' \emph{IEEE transactions on medical imaging}, vol.~37, no.~6, pp. 1454--1463, 2018.

\bibitem{he2020radon}
J.~He, Y.~Wang, and J.~Ma, ``Radon inversion via deep learning,'' \emph{IEEE transactions on medical imaging}, vol.~39, no.~6, pp. 2076--2087, 2020.

\bibitem{raj2020improving}
A.~Raj, Y.~Bresler, and B.~Li, ``Improving robustness of deep-learning-based image reconstruction,'' in \emph{International Conference on Machine Learning}.\hskip 1em plus 0.5em minus 0.4em\relax PMLR, 2020, pp. 7932--7942.

\bibitem{colbrook2022difficulty}
M.~J. Colbrook, V.~Antun, and A.~C. Hansen, ``The difficulty of computing stable and accurate neural networks: On the barriers of deep learning and smale’s 18th problem,'' \emph{Proceedings of the National Academy of Sciences}, vol. 119, no.~12, p. e2107151119, 2022.

\bibitem{genzel2022solving}
M.~Genzel, J.~Macdonald, and M.~M{\"a}rz, ``Solving inverse problems with deep neural networks--robustness included?'' \emph{IEEE transactions on pattern analysis and machine intelligence}, vol.~45, no.~1, pp. 1119--1134, 2022.

\bibitem{wu2024multi}
W.~Wu, J.~Pan, Y.~Wang, S.~Wang, and J.~Zhang, ``Multi-channel optimization generative model for stable ultra-sparse-view ct reconstruction,'' \emph{IEEE Transactions on Medical Imaging}, 2024.

\bibitem{krainovic2024learning}
A.~Krainovic, M.~Soltanolkotabi, and R.~Heckel, ``Learning provably robust estimators for inverse problems via jittering,'' \emph{Advances in Neural Information Processing Systems}, vol.~36, 2024.

\bibitem{sitzmann2020implicit}
V.~Sitzmann, J.~Martel, A.~Bergman, D.~Lindell, and G.~Wetzstein, ``Implicit neural representations with periodic activation functions,'' \emph{Advances in neural information processing systems}, vol.~33, pp. 7462--7473, 2020.

\bibitem{atzmon2020sal}
M.~Atzmon and Y.~Lipman, ``Sal: Sign agnostic learning of shapes from raw data,'' in \emph{Proceedings of the IEEE/CVF conference on computer vision and pattern recognition}, 2020, pp. 2565--2574.

\bibitem{chabra2020deep}
R.~Chabra, J.~E. Lenssen, E.~Ilg, T.~Schmidt, J.~Straub, S.~Lovegrove, and R.~Newcombe, ``Deep local shapes: Learning local sdf priors for detailed 3d reconstruction,'' in \emph{Computer Vision--ECCV 2020: 16th European Conference, Glasgow, UK, August 23--28, 2020, Proceedings, Part XXIX 16}.\hskip 1em plus 0.5em minus 0.4em\relax Springer, 2020, pp. 608--625.

\bibitem{chen2019learning}
Z.~Chen and H.~Zhang, ``Learning implicit fields for generative shape modeling,'' in \emph{Proceedings of the IEEE/CVF conference on computer vision and pattern recognition}, 2019, pp. 5939--5948.

\bibitem{peng2020convolutional}
S.~Peng, M.~Niemeyer, L.~Mescheder, M.~Pollefeys, and A.~Geiger, ``Convolutional occupancy networks,'' in \emph{Computer Vision--ECCV 2020: 16th European Conference, Glasgow, UK, August 23--28, 2020, Proceedings, Part III 16}.\hskip 1em plus 0.5em minus 0.4em\relax Springer, 2020, pp. 523--540.

\bibitem{jiang2020local}
C.~Jiang, A.~Sud, A.~Makadia, J.~Huang, M.~Nie{\ss}ner, T.~Funkhouser \emph{et~al.}, ``Local implicit grid representations for 3d scenes,'' in \emph{Proceedings of the IEEE/CVF Conference on Computer Vision and Pattern Recognition}, 2020, pp. 6001--6010.

\bibitem{dupont2022data}
E.~Dupont, H.~Kim, S.~Eslami, D.~Rezende, and D.~Rosenbaum, ``From data to functa: Your data point is a function and you can treat it like one,'' \emph{arXiv preprint arXiv:2201.12204}, 2022.

\bibitem{dupont2021generative}
E.~Dupont, Y.~W. Teh, and A.~Doucet, ``Generative models as distributions of functions,'' \emph{arXiv preprint arXiv:2102.04776}, 2021.

\bibitem{susmelj2024uncertainty}
A.~Susmelj, M.~Macuglia, N.~Tagasovska, R.~Sutter, S.~Caprara, J.-P. Thiran, and E.~Konukoglu, ``Uncertainty modeling for fine-tuned implicit functions,'' \emph{arXiv preprint arXiv:2406.12082}, 2024.

\bibitem{mildenhall2021nerf}
B.~Mildenhall, P.~P. Srinivasan, M.~Tancik, J.~T. Barron, R.~Ramamoorthi, and R.~Ng, ``Nerf: Representing scenes as neural radiance fields for view synthesis,'' \emph{Communications of the ACM}, vol.~65, no.~1, pp. 99--106, 2021.

\bibitem{vlavsic2022implicit}
T.~Vla{\v{s}}i{\'c}, H.~Nguyen, A.~Khorashadizadeh, and I.~Dokmani{\'c}, ``Implicit neural representation for mesh-free inverse obstacle scattering,'' in \emph{2022 56th Asilomar Conference on Signals, Systems, and Computers}.\hskip 1em plus 0.5em minus 0.4em\relax IEEE, 2022, pp. 947--952.

\bibitem{sun2021coil}
Y.~Sun, J.~Liu, M.~Xie, B.~Wohlberg, and U.~S. Kamilov, ``Coil: Coordinate-based internal learning for tomographic imaging,'' \emph{IEEE Transactions on Computational Imaging}, vol.~7, pp. 1400--1412, 2021.

\bibitem{zha2022naf}
R.~Zha, Y.~Zhang, and H.~Li, ``Naf: neural attenuation fields for sparse-view cbct reconstruction,'' in \emph{International Conference on Medical Image Computing and Computer-Assisted Intervention}.\hskip 1em plus 0.5em minus 0.4em\relax Springer, 2022, pp. 442--452.

\bibitem{golub1980analysis}
G.~H. Golub and C.~F. Van~Loan, ``An analysis of the total least squares problem,'' \emph{SIAM journal on numerical analysis}, vol.~17, no.~6, pp. 883--893, 1980.

\bibitem{markovsky2007overview}
I.~Markovsky and S.~Van~Huffel, ``Overview of total least-squares methods,'' \emph{Signal processing}, vol.~87, no.~10, pp. 2283--2302, 2007.

\bibitem{gupta2021total}
S.~Gupta and I.~Dokmani{\'c}, ``Total least squares phase retrieval,'' \emph{IEEE Transactions on Signal Processing}, vol.~70, pp. 536--549, 2021.

\bibitem{kak2001principles}
A.~C. Kak and M.~Slaney, \emph{Principles of computerized tomographic imaging}.\hskip 1em plus 0.5em minus 0.4em\relax SIAM, 2001.

\bibitem{wang2008outlook}
G.~Wang, H.~Yu, and B.~De~Man, ``An outlook on x-ray ct research and development,'' \emph{Medical physics}, vol.~35, no.~3, pp. 1051--1064, 2008.

\bibitem{de2014industrial}
L.~De~Chiffre, S.~Carmignato, J.-P. Kruth, R.~Schmitt, and A.~Weckenmann, ``Industrial applications of computed tomography,'' \emph{CIRP annals}, vol.~63, no.~2, pp. 655--677, 2014.

\bibitem{wells2012review}
K.~Wells and D.~Bradley, ``A review of x-ray explosives detection techniques for checked baggage,'' \emph{Applied Radiation and Isotopes}, vol.~70, no.~8, pp. 1729--1746, 2012.

\bibitem{helgason1965radon}
S.~Helgason, ``The radon transform on euclidean spaces, compact two-point homogeneous spaces and grassmann manifolds,'' \emph{Acta Mathematica}, vol. 113, no.~1, pp. 153--180, 1965.

\bibitem{HELGASON198091}
------, ``Support of radon transforms,'' \emph{Advances in Mathematics}, vol.~38, no.~1, pp. 91--100, 1980.

\bibitem{boman1987support}
J.~Boman and E.~T. Quinto, ``Support theorems for real-analytic radon transforms,'' 1987.

\bibitem{boman2006helgason}
J.~Boman, ``Helgason's support theorem for radon transforms—a new proof and a generalization,'' in \emph{Mathematical Methods in Tomography: Proceedings of a Conference held in Oberwolfach, Germany, 5--11 June, 1990}.\hskip 1em plus 0.5em minus 0.4em\relax Springer, 2006, pp. 1--5.

\bibitem{dosovitskiy2020image}
A.~Dosovitskiy, L.~Beyer, A.~Kolesnikov, D.~Weissenborn, X.~Zhai, T.~Unterthiner, M.~Dehghani, M.~Minderer, G.~Heigold, S.~Gelly \emph{et~al.}, ``An image is worth 16x16 words: Transformers for image recognition at scale,'' \emph{arXiv preprint arXiv:2010.11929}, 2020.

\bibitem{wang2004image}
Z.~Wang, A.~C. Bovik, H.~R. Sheikh, and E.~P. Simoncelli, ``Image quality assessment: from error visibility to structural similarity,'' \emph{IEEE transactions on image processing}, vol.~13, no.~4, pp. 600--612, 2004.

\bibitem{paszke2019pytorch}
A.~Paszke, S.~Gross, F.~Massa, A.~Lerer, J.~Bradbury, G.~Chanan, T.~Killeen, Z.~Lin, N.~Gimelshein, L.~Antiga \emph{et~al.}, ``Pytorch: An imperative style, high-performance deep learning library,'' \emph{Advances in neural information processing systems}, vol.~32, 2019.

\bibitem{kingma2014adam}
D.~P. Kingma and J.~Ba, ``Adam: A method for stochastic optimization,'' \emph{arXiv preprint arXiv:1412.6980}, 2014.

\bibitem{hssayeni2020computed}
M.~Hssayeni, M.~Croock, A.~Salman, H.~Al-khafaji, Z.~Yahya, and B.~Ghoraani, ``Computed tomography images for intracranial hemorrhage detection and segmentation,'' \emph{Intracranial Hemorrhage Segmentation Using A Deep Convolutional Model. Data}, vol.~5, no.~1, p.~14, 2020.

\bibitem{leuschner2021lodopab}
J.~Leuschner, M.~Schmidt, D.~O. Baguer, and P.~Maass, ``Lodopab-ct, a benchmark dataset for low-dose computed tomography reconstruction,'' \emph{Scientific Data}, vol.~8, no.~1, p. 109, 2021.

\bibitem{agustsson2017ntire}
E.~Agustsson and R.~Timofte, ``Ntire 2017 challenge on single image super-resolution: Dataset and study,'' in \emph{Proceedings of the IEEE conference on computer vision and pattern recognition workshops}, 2017, pp. 126--135.

\bibitem{karras2017progressive}
T.~Karras, ``Progressive growing of gans for improved quality, stability, and variation,'' \emph{arXiv preprint arXiv:1710.10196}, 2017.

\bibitem{he2023dynamic}
Z.~He and Z.~Jin, ``Dynamic implicit image function for efficient arbitrary-scale image representation,'' \emph{arXiv preprint arXiv:2306.12321}, 2023.

\bibitem{he2024latent}
------, ``Latent modulated function for computational optimal continuous image representation,'' in \emph{Proceedings of the IEEE/CVF Conference on Computer Vision and Pattern Recognition}, 2024, pp. 26\,026--26\,035.

\bibitem{Shazeer2017}
N.~Shazeer, A.~Mirhoseini, K.~Maziarz, A.~Davis, Q.~Le, G.~Hinton, and J.~Dean, ``Outrageously large neural networks: The sparsely-gated mixture-of-experts layer,'' in \emph{International Conference on Learning Representations}, 2017.

\bibitem{Riquelme2021scaling}
C.~Riquelme, J.~Puigcerver, B.~Mustafa, M.~Neumann, R.~Jenatton, A.~Susano~Pinto, D.~Keysers, and N.~Houlsby, ``Scaling vision with sparse mixture of experts,'' \emph{Advances in Neural Information Processing Systems}, vol.~34, pp. 8583--8595, 2021.

\bibitem{Fedus2022review}
W.~Fedus, J.~Dean, and B.~Zoph, ``A review of sparse expert models in deep learning,'' \emph{arXiv preprint arXiv:2209.01667}, 2022.

\bibitem{vaswani2017attention}
A.~Vaswani, N.~Shazeer, N.~Parmar, J.~Uszkoreit, L.~Jones, A.~N. Gomez, {\L}.~Kaiser, and I.~Polosukhin, ``Attention is all you need,'' \emph{Advances in neural information processing systems}, vol.~30, 2017.

\bibitem{gonzalez2006digital}
R.~Gonzalez and R.~Woods, \emph{Digital Image Processing (3rd Edition)}.\hskip 1em plus 0.5em minus 0.4em\relax Prentice-Hall, Inc., 2006.

\bibitem{jathoul2015deep}
A.~P. Jathoul, J.~Laufer, O.~Ogunlade, B.~Treeby, B.~Cox, E.~Zhang, P.~Johnson, A.~R. Pizzey, B.~Philip, T.~Marafioti \emph{et~al.}, ``Deep in vivo photoacoustic imaging of mammalian tissues using a tyrosinase-based genetic reporter,'' \emph{Nature Photonics}, vol.~9, no.~4, pp. 239--246, 2015.

\bibitem{yao2015high}
J.~Yao, L.~Wang, J.-M. Yang, K.~I. Maslov, T.~T. Wong, L.~Li, C.-H. Huang, J.~Zou, and L.~V. Wang, ``High-speed label-free functional photoacoustic microscopy of mouse brain in action,'' \emph{Nature methods}, vol.~12, no.~5, pp. 407--410, 2015.

\bibitem{doerr2017cryo}
A.~Doerr, ``Cryo-electron tomography,'' \emph{Nature Methods}, vol.~14, no.~1, pp. 34--34, 2017.

\bibitem{debarnot2024ice}
V.~Debarnot, V.~Kishore, R.~D. Righetto, and I.~Dokmani{\'c}, ``Ice-tide: Implicit cryo-et imaging and deformation estimation,'' \emph{arXiv preprint arXiv:2403.02182}, 2024.

\bibitem{hauptmann2023model}
A.~Hauptmann and J.~Poimala, ``Model-corrected learned primal-dual models for fast limited-view photoacoustic tomography,'' \emph{arXiv preprint arXiv:2304.01963}, 2023.

\bibitem{chen2021equivariant}
D.~Chen, J.~Tachella, and M.~E. Davies, ``Equivariant imaging: Learning beyond the range space,'' in \emph{Proceedings of the IEEE/CVF International Conference on Computer Vision}, 2021, pp. 4379--4388.

\end{thebibliography}

\end{document}